\DeclareMathOperator*{\argmin}{arg\,min}
\theoremstyle{plain}
\newtheorem{proposition}{Proposition}
\theoremstyle{definition}
\theoremstyle{remark}
\newtheorem{remark}{Remark}
\icmltitlerunning{Learning to Decouple Complex Systems}
\begin{document}

\twocolumn[
\icmltitle{Learning to Decouple Complex Systems}

% It is OKAY to include author information, even for blind
% submissions: the style file will automatically remove it for you
% unless you've provided the [accepted] option to the icml2023
% package.

% List of affiliations: The first argument should be a (short)
% identifier you will use later to specify author affiliations
% Academic affiliations should list Department, University, City, Region, Country
% Industry affiliations should list Company, City, Region, Country

% You can specify symbols, otherwise they are numbered in order.
% Ideally, you should not use this facility. Affiliations will be numbered
% in order of appearance and this is the preferred way.
%\icmlsetsymbol{equal}{*}

\begin{icmlauthorlist}
\icmlauthor{Zihan Zhou}{cuhksz}
\icmlauthor{Tianshu Yu}{cuhksz,airs}
%\icmlauthor{Firstname3 Lastname3}{comp}
%\icmlauthor{Firstname4 Lastname4}{sch}
%\icmlauthor{Firstname5 Lastname5}{yyy}
%\icmlauthor{Firstname6 Lastname6}{sch,yyy,comp}
%\icmlauthor{Firstname7 Lastname7}{comp}
%\icmlauthor{}{sch}
%\icmlauthor{Firstname8 Lastname8}{sch}
%\icmlauthor{Firstname8 Lastname8}{yyy,comp}
%\icmlauthor{}{sch}
%\icmlauthor{}{sch}
\end{icmlauthorlist}

\icmlaffiliation{cuhksz}{The Chinese University of Hong Kong, Shenzhen}
\icmlaffiliation{airs}{Shenzhen Institute of Artificial Intelligence and Robotics for Society}
%\icmlaffiliation{airs}{Approaching Tech}
%\icmlaffiliation{sch}{School of ZZZ, Institute of WWW, Location, Country}

\icmlcorrespondingauthor{Tianshu Yu}{yutianshu@cuhk.edu.cn}
%\icmlcorrespondingauthor{Firstname2 Lastname2}{first2.last2@www.uk}

% You may provide any keywords that you
% find helpful for describing your paper; these are used to populate
% the "keywords" metadata in the PDF but will not be shown in the document
\icmlkeywords{Machine Learning, ICML, Neural Differential Equation, Sequential Learning, Decoupling Complex System}

\vskip 0.3in
]

% this must go after the closing bracket ] following \twocolumn[ ...

% This command actually creates the footnote in the first column
% listing the affiliations and the copyright notice.
% The command takes one argument, which is text to display at the start of the footnote.
% The \icmlEqualContribution command is standard text for equal contribution.
% Remove it (just {}) if you do not need this facility.

\printAffiliationsAndNotice{}  % leave blank if no need to mention equal contribution
%\printAffiliationsAndNotice{\icmlEqualContribution} % otherwise use the standard text.

\begin{abstract}
A complex system with cluttered observations may be a coupled mixture of multiple simple sub-systems corresponding to \emph{latent entities}. Such sub-systems may hold distinct dynamics in the continuous-time domain; therein, complicated interactions between sub-systems also evolve over time. This setting is fairly common in the real world but has been less considered. In this paper, we propose a sequential learning approach under this setting by decoupling a complex system for handling irregularly sampled and cluttered sequential observations. Such decoupling brings about not only subsystems describing the dynamics of each latent entity but also a meta-system capturing the interaction between entities over time. 
Specifically, we argue that the meta-system evolving within a simplex is governed by \emph{projected differential equations (ProjDEs)}.
We further analyze and provide neural-friendly projection operators in the context of Bregman divergence. 
Experimental results on synthetic and real-world datasets show the advantages of our approach when facing complex and cluttered sequential data compared to the state-of-the-art.

\end{abstract}

\section{Introduction}\label{sec:intro}
Discovering hidden rules from sequential observations has been an essential topic in machine learning, with a large variety of applications such as physics simulation \citep{sanchez2020learning}, autonomous driving \citep{diehl2019graph}, ECG analysis \citep{golany2021ecg} and event analysis \citep{chen2020learning}, to name a few. A standard scheme is to consider sequential data at each timestamp to be holistic and homogeneous under some ideal assumptions (i.e., only the temporal behavior of one entity is involved in a sequence), under which data/observation is treated as a collection of slices at a different time from a unified system.
A series of sequential learning models fall into this category, including variants of recurrent neural networks (RNNs) \citep{cho2014properties,hochreiter1997long}, neural differential equations (DEs) \citep{chen2018neural,kidger2020neural,rusch2021unicornn,zhu2021neural} and spatial/temporal attention-based approaches \citep{vaswani2017attention,fan2019multi,song2017end}.
These variants fit well into the scenarios agreeing with the aforementioned assumptions and are proved effective in learning or modeling for relatively simple applications with clean data sources. 

In the real world, a system may not only describe a single and holistic entity but also consist of several \emph{distinguishable} interacting but simple subsystems, where each subsystem corresponds to a physical entity. For example, we can think of the movement of a solar system as the mixture of distinguishable subsystems of the sun and surrounding planets, while interactions between these celestial bodies over time are governed by the laws of gravity. Back centuries ago, physicists and astronomers made enormous efforts to discover the rule of celestial movements from the records of every single body and eventually delivered the neat yet elegant differential equations (DEs) depicting principles of moving bodies and interactions therein. Likewise, nowadays, researchers also developed a series of machine learning models for sequential data with distinguishable partitions \citep{qin2017dual}. Two widely adopted strategies for learning the interactions between subsystems are graph neural networks \citep{iakovlev2020learning,ha2021unraveling,kipf2018neural,yildiz2022learning,xhonneux2020continuous} and attention mechanism \citep{vaswani2017attention,lu2019understanding,goyal2019recurrent}, while the interactions are typically encoded with ``messages'' between nodes and pair-wise ``attention scores'', respectively.
%where each node to a single partition throughout.

It is worth noting an even more difficult scenario: 
\begin{itemize}
    \item \emph{The data/observation is so cluttered that cannot be readily distinguished into separate parts}. 
\end{itemize}
This can be either due to the way of data collection (e.g., videos consisting of multiple objects) or because there are no explicit physical entities originally (e.g., weather time series). 
%This scenario is ubiquitous but less considered in previous work.
To tackle this, a fair assumption can be introduced that complex observations can be decoupled into several relatively independent modules in the feature space, where each module corresponds to a \emph{latent entity}. Latent entities may not have exact physical meanings, but learning procedures can greatly benefit from such decoupling, as this assumption can be viewed as strong regularization to the system. This assumption has been successfully incorporated in several models for learning from \emph{regularly} sampled sequential data by emphasizing ``independence'' to some extent between channels or groups in the feature space \citep{li2018independently,yu2020rhyrnn,goyal2019recurrent,madan2021fast}. 
Another successful counterpart in parallel benefiting from this assumption is transformer \citep{vaswani2017attention} which stacks multiple layers of self-attention and point-wise feedforward networks. In transformers, each attention head can be viewed as a relatively independent module, and interaction happens throughout the head re-weighting procedure following the attention scores. \citet{lu2019understanding} presented an interpretation from a dynamic point of view by regarding a basic layer in the transformer as one step of integration governed by differential equations derived from interacting particles. \citet{vuckovic2020mathematical} extended this interpretation with more solid mathematical support by viewing the forward pass of the transformer as applying successive Markov kernels in a particle-based dynamic system. 
%Despite such an interpretation, several points still remain unclear. First, it is opaque how to incorporate this assumption into neural DEs for irregularly sampled and cluttered sequential data; Second, in a principled manner.

We note, however, despite the ubiquity of this setting, there is barely any previous investigation focusing on learning for \emph{irregularly sampled} and \emph{cluttered} sequential data. The aforementioned works either fail to handle the irregularity \citep{goyal2019recurrent,li2018independently} or neglect the independence/modularity assumption in the latent space \citep{chen2018neural,kidger2020neural}. 
In this paper, inspired by recent advances of neural controlled dynamics \citep{kidger2020neural} and novel interpretation of attention mechanism \citep{vuckovic2020mathematical}, we take a step to propose an effective approach addressing this problem under the dynamic setting. To this end, our approach explicitly learned to decouple a complex system into several latent sub-systems and utilizes an additional meta-system capturing the evolution of interactions over time.
Specifically, taking into account the meta-system capturing interactions evolving in a constrained set (e.g., simplex), we further characterized such interactions using projected differential equations (ProjDEs) with neural-friendly projection operators. We argued our \textbf{contributions} as follows:
%Specifically, taking into account the constrained interactions analogous to the attention mechanism, we further characterized such interactions using projected differential equations (ProjDEs). We argued our \textbf{contributions} as follows:
\begin{itemize}
    \item We provide a novel modeling strategy for sequential data from a system decoupling perspective;
    \item We propose a novel and natural interpretation of evolving interactions as a ProjDE-based meta-system, with insights into projection operators in the sense of Bregman divergence;
    %\item By introducing the meta-system, our approach tends to ease gradient back-propagation, leading to superior performance;
    \item Our approach is parameter-insensitive and more compatible with other modules and data, thus being flexible to be integrated into various tasks.
    %generalized with Bregman Divergence, \textcolor{blue}{and thus with higher ability for modeling interactions}.
\end{itemize}
Extensive experiments were conducted on either regularly or irregularly sampled sequential data, including both synthetic and real-world settings. It was observed that our approach achieved prominent performance compared to the state-of-the-art on a wide spectrum of tasks.
Our code is available at \url{https://github.com/LOGO-CUHKSZ/DNS}.

\section{Related Work}

%\paragraph{Learning of dynamics.} Injecting observations/controls to an ODE \citep{chen2018neural,de2019gru}.

\paragraph{Sequential Learning.} Traditionally, learning with sequential data can be performed using variants of recurrent neural networks (RNNs) \citep{hochreiter1997long,cho2014properties,li2018independently} under the Markov setting. While such RNNs are generally designed for regular sampling frequency, a more natural line of counterparts lies in the continuous time domain allowing irregularly sampled time series as input. As such, a variety of RNN-based methods are developed by introducing exponential decay on observations \citep{che2018recurrent,mei2017neural}, incorporating an underlying Gaussian process \citep{li2016scalable,futoma2017learning}, or integrating some latent evolution under ODEs \citep{rubanova2019latent,de2019gru}. A seminal work interpreting forward passing in neural networks as an integration of ODEs was proposed in \citet{chen2018neural}, followed by a series of relevant works \citep{liu2019neural,li2020scalable,dupont2019augmented}. As integration over ODEs allows for arbitrary step length, it is natural modeling of irregular time series and proved powerful in many machine learning tasks (e.g., bioinformatics \citep{golany2021ecg}, physics \citep{nardini2021learning} and computer vision \citep{park2021vid}). \cite{kidger2020neural} studied a more effective way of injecting observations into the system via a mathematical tool called Controlled differential Equation, achieving state-of-the-art performance on several benchmarks. Some variants of neural ODEs have also been extended to discrete structure \citep{chamberlain2021grand,xhonneux2020continuous} and non-Euclidean setting \citep{chamberlain2021beltrami}.

\paragraph{Learning with Independence.} Independence or modular property serves as strong regularization or prior in some learning tasks under static setting \citep{wang2020learning,liu2020independence}. In the sequential case, some early attempts over RNNs emphasized implicit ``independence'' in the feature space between dimensions or channels \citep{li2018independently,yu2020rhyrnn}. As independence assumption commonly holds in vision tasks (with distinguishable objects), \citet{pang2020complex,li2020hoi} proposed video understanding schemes by decoupling the spatiotemporal patterns. For a more generic case where the observations are collected without any prior, \citet{goyal2019recurrent} devised a sequential learning scheme called recurrent independence mechanism (RIM), 
and its generalization ability was extensively studied in \citet{madan2021fast}. \citet{lu2019understanding} investigated self-attention mechanism \citep{vaswani2017attention} and interpreted it as a nearly independent multi-particle system with interactions therein. \citet{vuckovic2020mathematical} further provided more solid mathematical analysis with the tool of Markov kernel. The study of such a mechanism in the dynamical setting was barely observed.

\textbf{Learning Dynamics under Constraints.} It is practically significant as a series of real-world systems evolve within some manifolds, such as fluid \cite{vinuesa2022enhancing}, coarse-grained dynamics \cite{kaltenbach2020incorporating}, and molecule modeling \cite{chmiela2020accurate}. While some previous research incorporates constraints from a physical perspective \cite{kaltenbach2020incorporating,linot2020deep}, an emerging line is empowered by machine learning to integrate or even discover the constraints \cite{kolter2019learning,lou2020neural,goldt2020modeling}. To ensure a system evolves in constraints, efficient projections or pseudo-projections are required, about which Bregman divergence provides rich insights \cite{martins2016softmax,krichene2015efficient,lim2016efficient}. Despite these results, to our best knowledge, there is barely any related investigation about neural-friendly projections.

\section{Methodology}

\subsection{Background}\label{sec:background}
In this section, we briefly review three aspects related to our approach. Our approach is built upon the basic sub-system derived from \emph{Neural Controlled Dynamics} \citep{kidger2020neural}, while the interactions are modeled at an additional meta-system analogous to \emph{Self-attention} \citep{lu2019understanding,vuckovic2020mathematical}, and further interpreted and generalized using the tool of \emph{Projected Differential Equations} \citep{dupuis1993dynamical}.

\paragraph{Neural Controlled Dynamics.} Continuous-time dynamics can be expressed using differential equations $\mathbf{z}'(t)=d\mathbf{z}/dt =f(\mathbf{z}(t),t)$, where $\mathbf{z}\in\mathbb{R}^d$ and $t$ are a $d$-dimension state and the time, respectively. Function $f:\mathbb{R}^d\times\mathbb{R}_+\rightarrow \mathbb{R}^d$ governs the evolution of the dynamics. Given the initial state $\mathbf{z}(t_0)$, the state at any time $t_1$ can be evaluated with:
\begin{equation}\label{eq: ode}
    \mathbf{z}(t_1)=\mathbf{z}(t_0)+\int_{t_0}^{t_1}f(\mathbf{z}(s),s)\mathrm{d}s
\end{equation}
In practice, we aim at learning the dynamics from a series of observations or controls $\{\mathbf{x}(t_k)\in\mathbb{R}^b | k=0,1,...\}$ by parameterizing the dynamics with $f_\theta(\cdot)$ where $\theta$ is the unknown parameter to be learned. Thus, a generic dynamics incorporating outer signals $\mathbf{x}$ can be written as:
\begin{equation}\label{eq: node}
    \mathbf{z}(t_1)=\mathbf{z}(t_0)+\int_{t_0}^{t_1}f_\theta(\mathbf{z}(s),\mathbf{x}(s),s)\mathrm{d}s
\end{equation}
%Several discretization methods for evaluating Eq.~(\ref{eq: node}) are widely adopted in machine learning community, including \cite{}. 
Rather than directly injecting $\mathbf{x}$ as in Eq.~(\ref{eq: node}), Neural Controlled Differential Equation (CDE) proposed to deal with outer signals with a Riemann–Stieltjes integral \citep{kidger2020neural}:
\begin{equation}\label{eq: ncde}
    \mathbf{z}(t_1)=\mathbf{z}(t_0)+\int_{t_0}^{t_1} \mathbf{F}_\theta(\mathbf{z}(s))\mathbf{x}'(s)\mathrm{d}s
\end{equation}
where $\mathbf{F}_\theta:\mathbb{R}^d \rightarrow \mathbb{R}^{d\times b}$ is a learnable vector field and $\mathbf{x}'(s)=\mathrm{d}\mathbf{x}/\mathrm{d}s$ is the derivative of signal $\mathbf{x}$ w.r.t. time $s$, thus ``$\mathbf{F}_\theta(\mathbf{z}(s))\mathbf{x}'(s)$'' is a matrix-vector multiplication. During implementation, \citet{kidger2020neural} argued that a simple cubic spline interpolation on $\mathbf{x}$ allows dense calculation of $\mathbf{x}'(t)$ at any time $t$ and exhibits promising performance. In \cite{kidger2020neural}, it is also mathematically shown that incorporating observations/controls following Eq. (\ref{eq: ncde}) is with greater representation ability compared to Eq. (\ref{eq: node}), hence achieving state-of-the-art performance on several public tasks.

\paragraph{Self-attention.} It is argued in \citet{lu2019understanding,vuckovic2020mathematical} that a basic unit in Transformer \citep{vaswani2017attention} with self-link consisting of one self-attention layer and point-wise feedforward layer amounts to simulating a multi-particle dynamical system. Considering such a layer with $n$ attention-heads (corresponding to $n$ particles), given an attention head index $i\in\{1,2,...,n\}$, the update rule of the $i$th unit at depth $l$ reads:
\begin{subequations}\label{eq: att}
    \begin{alignat}{2}                  &\tilde{\mathbf{z}}_{l,i}=\mathbf{z}_{l,i}+\mathrm{MHAtt}_{W^l_{\mathrm{att}}}\left(\mathbf{z}_{l,i},\left[\mathbf{z}_{l,1},...,\mathbf{z}_{l,n}\right]\right) \label{eq: att_dif} \\
    &\mathbf{z}_{l+1,i}=\tilde{\mathbf{z}}_{l,i}+\mathrm{FFN}_{W^l_{\mathrm{ffn}}}\left(\tilde{\mathbf{z}}_{l,i}\right) \label{eq: att_conv}
    \end{alignat}
\end{subequations}
where $\mathrm{MHAtt}_{W^l_{\mathrm{att}}}$ and $\mathrm{FFN}_{W^l_{\mathrm{ffn}}}$ are multi-head attention layer and feedforward layer with parameters $W^l_{\mathrm{att}}$ and $W^l_{\mathrm{ffn}}$, respectively. Eq.~(\ref{eq: att}) can then be interpreted as an interacting multi-particle system:
\begin{equation}\label{eq: att_overall}
    \frac{\mathrm{d}\mathbf{z}_i(t)}{\mathrm{d}t}=F(\mathbf{z}_i(t),[\mathbf{z}_1(t),...,\mathbf{z}_n(t)],t)+G(\mathbf{z}_i(t))
\end{equation}
where function $F$ corresponding to Eq.~(\ref{eq: att_dif}) represents the diffusion term and $G$ corresponding to Eq.~(\ref{eq: att_conv}) stands for the convection term. Notably, the attention score obtained via $\mathrm{softmax}$ in Eq.~(\ref{eq: att_dif}) is regarded as a Markov kernel. 
Readers are referred to \citet{lu2019understanding,vuckovic2020mathematical} for more details.

\paragraph{Projected DEs.} It is a tool depicting the behavior of dynamics where solutions are constrained within a (convex) set. Concretely, given a closed polyhedral $\mathcal{K}\subset \mathbb{R}^n$ and a mapping $H:\mathcal{K}\rightarrow \mathbb{R}^n$, we can introduce an operator $\Pi_{\mathcal{K}}: \mathbb{R}^n\times\mathcal{K} \rightarrow \mathbb{R}^n$ which is defined by means of directional derivatives as:
\begin{equation}\label{eq: derivative}
    \Pi_{\mathcal{K}}(\mathbf{a},H(\mathbf{a}))=\lim_{\alpha\rightarrow 0_+}\frac{P_{\mathcal{K}}(\mathbf{a}+\alpha H(\mathbf{a}))-\mathbf{a}}{\alpha}
\end{equation}
where $P_{\mathcal{K}}(\cdot)$ is a projection onto $\mathcal{K}$ in terms of Euclidean distance:
\begin{equation}\label{eq:proj-l2}
    \lVert P_{\mathcal{K}}(\mathbf{a})-\mathbf{a}\rVert _2=\inf_{\mathbf{y}\in\mathcal{K}}\lVert\mathbf{y}-\mathbf{a}\rVert_2
\end{equation}
Intuitively, Eq.~(\ref{eq: derivative}) pictures the dynamics of $\mathbf{a}$ driven by function $H$, but constrained within $\mathcal{K}$. Whenever $\mathbf{a}$ reaches beyond $\mathcal{K}$, it would be projected back using Eq.~(\ref{eq:proj-l2}). By extending Eq.~(\ref{eq: derivative}), \cite{dupuis1993dynamical,zhang1995stability} considered the projected differential equations as follows:
\begin{equation}
    \frac{\mathrm{d}\mathbf{a}(t)}{\mathrm{d}t}=\Pi_{\mathcal{K}}(\mathbf{a},H(\mathbf{a}))
\end{equation}
which allows for discontinuous dynamics on $\mathbf{a}$.

\subsection{Learning to Decouple}\label{sec:method}
Our method is built upon the assumption that cluttered sequential observations are composed of several relatively independent sub-systems and, therefore, explicitly learns to decouple them as well as to capture the mutual interactions with a meta-system in parallel.
Let the cluttered observations/controlls be $\mathbf{c}(t)\in\mathbb{R}^k$ at time $t$ for $t=1,...,T$, where $T$ is the time horizon. We employ $n$ distinct mappings with learnable parameters (e.g., MLP) to obtain respective controls to each sub-system: $\mathbf{x}_i(t)=p_i(\mathbf{c}(t))\in\mathbb{R}^m$ for $i=1,...,n$. A generic dynamics of the proposed method can be written as:
\begin{subequations}\label{eq:dec}
    \begin{alignat}{2}
        \frac{\mathrm{d}\mathbf{z}_i(t)}{\mathrm{d}t}&=f_{i}\left(\mathbf{z}_i(t),\left[\mathbf{z}_1(t),...,\mathbf{z}_n(t)\right],\mathbf{x}_i(t),\mathbf{a}(t)\right) \label{eq:dec_sub}\\
        \frac{\mathrm{d}\textbf{a}(t)}{\mathrm{d}t}&=\Pi_\mathcal{S}\left(\mathbf{a}(t),g(\mathbf{a}(t),\left[\mathbf{z}_1
        (t),...,\mathbf{z}_n(t)\right])\right) \label{eq:dec_att}
    \end{alignat}
\end{subequations}
where Eq.~(\ref{eq:dec_sub}) and Eq.~(\ref{eq:dec_att}) refer to the $i$th sub-system describing the evolution of a single latent entity and meta-system depicting the interactions, respectively. $\mathbf{z}_i(t)\in\mathbb{R}^q$ is the hidden state for the $i$th subsystem, and $\mathbf{a}$ is a tensor governs the dynamics of the interactions. Here $\Pi_\mathcal{S}(\cdot)$ is a projection operator, which projects the evolving trajectory into set $\mathcal{S}$. We introduce such an operator as it is assumed that interactions among latent entities should be constrained following some latent manifold structure. $f_i(\cdot)$ and $g(\cdot)$ are both learnable functions and also the essential roles for capturing the underlying complex dynamics.

\begin{remark}
It is seen the projection operator $\Pi_\mathcal{S}(\cdot)$ and the set $\mathcal{S}$ play important roles in Eq.~(\ref{eq:dec_att}). For $\Pi_\mathcal{S}(\cdot)$, while previous works of ProjDEs only consider L2-induced projection, we propose novel interpretation and extension under Bregman divergence. For $\mathcal{S}$, we consider a probabilistic simplex following the setting in \citet{lu2019understanding,vuckovic2020mathematical}, though it can be any polyhedral.
\end{remark}

According to Eq.~(\ref{eq:dec}), we fully decouple a complex system into several components. Although we found some decoupling counterparts in the context of RNNs \citep{li2018independently,yu2020rhyrnn} and attention-like mechanism \citep{lu2019understanding,goyal2019recurrent}, their decoupling could not be applied to our problem.
We elaborate on the details of implementing Eq.~(\ref{eq:dec}) in the following.

\paragraph{Learning Sub-systems.} Sub-systems corresponding to the latent entities seek to model relatively independent dynamics separately. Specifically, we employ the way of integrating $\mathbf{x}_i$s into Eq.~(\ref{eq:dec_sub}) in a controlled dynamical fashion as in the state-of-the-art method \citep{kidger2020neural}:
\begin{equation}\label{eq: sub_generic}
    \mathrm{d}\mathbf{z}_i(t)= \mathbf{F}_i \left( \mathbf{z}_i(t),\mathbf{a}(t),\left[\mathbf{z}_1(t),...,\mathbf{z}_n(t) \right] \right) \mathrm{d}\mathbf{x}_i(t)
\end{equation}
where $\mathbf{F}_i(\cdot)\in\mathbb{R}^{q\times m}$ is a learnable vector field. Concretely, if we let $\mathbf{z}(t)=\left[\mathbf{z}_i(t),...,\mathbf{z}_n(t) \right]$ be the tensor collecting all sub-systems, the $i$th sub-system in a self-attention fashion reads:
\begin{equation}\label{eq: sub}
    \mathrm{d}\mathbf{z}_i(t)= \mathbf{F}( \left[ \mathbf{A}(t) \cdot \mathbf{z}(t) \right] _i) \mathrm{d}\mathbf{x}_i(t)
\end{equation}
where $[\cdot]_i$ takes the $i$th slice from a tensor. Note timestamp $t$ can be arbitrary, resulting in irregularly sampled sequential data. To address this, we follow the strategy in \citet{kidger2020neural} by performing cubic spline interpolation on $\mathbf{x}_i$ over observed timestamp $t$, resulting in $\mathbf{x}_i(t)$ at dense time $t$. Note that for all sub-systems, different from Eq.~(\ref{eq: sub_generic}) we utilize an identical function/network $\mathbf{F}(\cdot)$ as in Eq.~(\ref{eq: sub}), but with different control sequence $\mathbf{x}_i(t)=p_i(\mathbf{c}(t))$. Since in our implementation, $p_i(\cdot)$ is a lightweight network such as MLP, this can significantly reduce the parameter size.

\paragraph{Learning Interactions.} In our approach, interactions between latent entities are modeled separately as another meta-system. This is quite different from some related methods \citep{lu2019understanding, vuckovic2020mathematical} where sub-systems and interactions are treated as one holistic step of forward integration. For the meta-system describing the interactions in Eq.~(\ref{eq:dec_att}), two essential components are involved: domain $\mathcal{S}$ and the projection operator $\Pi$. In the context of ProjDEs, a system is constrained as $\mathbf{a}(t)\in\mathcal{S}$ for any $t$. In terms of interactions, a common choice of $\mathcal{S}$ is the stochastic simplex which can be interpreted as a transition kernel \citep{vuckovic2020mathematical}. We allow follow this setting by defining $\mathcal{S}$ be a row-wise stochastic $(n-1)$-simplices:
%Inspired by the attention mechanism \citep{vaswani2017attention}, it is an advantageous choice by letting $\mathcal{S}$ be the set of column-wise stochastic simplex:
\begin{equation}\label{eq: simplex}
    \mathcal{S}\triangleq\{\mathbf{A}\in\mathbb{R}^{n\times n}|\mathbf{A1=1},\mathbf{A}_{ij}\geq 0\}
\end{equation}
where $\mathbf{1}$ is a vector with all $1$ entries. $\mathbf{A}=\mathrm{mat}(\mathbf{a})$ is a $n\times n$ matrix. In the sequel, we will use the notation $\mathbf{A}$ throughout.
Thus the meta-system capturing the interactions can be implemented as follows:
\begin{equation}\label{eq: att_implement}
    \frac{d\mathbf{A}(t)}{dt}=\Pi_\mathcal{S}\left(\mathbf{A}(t),g(\mathbf{A}(t),\left[\mathbf{z}_1(t),...,\mathbf{z}_n(t)\right])\right)
\end{equation}
%Note in the underbrace of Eq.~(\ref{eq: att_implement}), there is no $\mathbf{A}$ compared to the general case in Eq.~(\ref{eq: dec_att}), in accordance with the usage of attention mechanism. 
For the projection operator, we consider two versions shown in Eq.~(\ref{eq:projs}). In Eq.~(\ref{eq:pj_soft}), we give a row-wise projection onto the $(n-1)$-simplex with entropic regularization \citep{amos2019differentiable}, which has a well-known closed-form solution $\mathrm{softmax}(\cdot)$ appearing in attention mechanism. In Eq.~(\ref{eq:pj_sparse}), we adopt a standard L2-induced projection identical to Eq.~(\ref{eq:proj-l2}), which leads to sparse solutions \cite{wainwright2008graphical}. Intuitively, the projection of a point onto a simplex in terms of L2 distance tends to lie on a facet or a vertex of a simplex, thus being sparse.
\begin{subequations}\label{eq:projs}
    \begin{equation}\label{eq:pj_soft}
    %\lVert P_{\mathcal{S}}(\mathbf{a})-\mathbf{a}\rVert _2=\inf_{\mathbf{y}\in\mathcal{S}}\lVert\mathbf{y}-\mathbf{a}\rVert_2+\gamma\mathbb{H}(\mathbf{y})
    P_\mathcal{S}^{\text{soft}}(\mathbf{A}_{j,:})=\argmin_{\mathbf{B}\in\mathcal{S}}\mathbf{A}_{j,:}^\top \mathbf{B}_{:,j}-\mathbb{H}^{\text{entr}
    }(\mathbf{B}_{:,j}) 
    \end{equation}
    \begin{equation}\label{eq:pj_sparse}
    \begin{split}
        P_\mathcal{S}^{\text{sparse}}(\mathbf{A}_{j,:})&=\argmin_{\mathbf{B}\in\mathcal{S}}\mathbf{A}_{j,:}^\top \mathbf{B}_{:,j}-\mathbb{H}^{\text{gini}}(\mathbf{B}_{:,j}) \\
        &=\argmin_{\mathbf{B}\in\mathcal{S}}|\mathbf{A}_{j,:}-\mathbf{B}_{:,j}|^2
    \end{split}
    \end{equation}
\end{subequations}
where $\mathbb{H}^{\text{entr}}(\cdot)$ and $\mathbb{H}^{\text{gini}}(\mathbf{y})= \frac{1}{2} \sum_i\mathbf{y}_i (\mathbf{y}_i - 1)$ are the standard entropy and the gini-entropy, respectively. $\mathbf{A}_{j,:}$ and $\mathbf{B}_{:,j}$ are the $i$th row and column of $\mathbf{A}$ and $\mathbf{B}$, respectively. 
While the solution to Eq.~(\ref{eq:pj_soft}) is $\mathrm{softmax}(\mathbf{A})$, Eq.~(\ref{eq:pj_sparse}) also has closed-form solution shown in Appendix \ref{app:soft_sparse}.
%It is readily derived that row-wise $\mathrm{softmax}(\mathbf{A})$ is the solution to Eq.~(\ref{eq: softmax_proj}) by utilizing the KKT condition. 
Comparing Eq.~(\ref{eq:pj_soft}) to the standard Euclidean projection in Eq.~(\ref{eq:pj_sparse}), we note the entropic regularization $\mathbb{H}(\cdot)$ in Eq.~(\ref{eq:pj_soft}) allows for a smoother trajectory by projecting any $\mathbf{A}$ into the interior of $(n-1)$-simplex.
We visualize the two versions of projections in Eq.~(\ref{eq:projs}) onto $1$-simplex from some random points in Fig.~\ref{fig:softVSsparse}. One can readily see that Eq.~(\ref{eq:pj_sparse}) is an exact projection such that points far from the simplex are projected onto the boundary. However, $\mathrm{softmax}$ is smoother by projecting all points onto a relative interior of 1-simplex without sudden change. In the context of Bregman divergence, different distances can facilitate efficient convergence under different ``L-relative smoothness'' \cite{dragomir2021fast}, which can potentially accelerate the learning of dynamics. We leave this to our future work.

We further discuss some neural-friendly features of Eq.~(\ref{eq:pj_soft}) and (\ref{eq:pj_sparse}) facilitating the neural computation:

\textbf{(1)} First, the neural computational graph can be simplified using projection Eq.~(\ref{eq:pj_soft}). Though Eq.~(\ref{eq: att_implement}) using projection Eq.~(\ref{eq:pj_soft}) defines a projected dynamical system directly on $\mathbf{A}$, we switch to update the system using $\mathbf{L}$ as follows, which is considered to further ease the forward integration. This is achieved by instead modeling the dynamics of the feature before fed into $\mathrm{softmax}(\cdot)$:
\begin{subequations}\label{eq: transform}
\begin{small}
    \begin{alignat}{2}
        \mathbf{A}(t) &= \text{Softmax}(\mathbf{L}(t)) \\
        \mathbf{L}(t) &= \mathbf{L}(0) + \int_0^t \frac{\mathrm{d}}{\mathrm{d}s} \frac{\mathbf{Q}(\mathbf{z}(s)) \cdot \mathbf{K}^\top(\mathbf{z}(s))}{\sqrt{d_k}} \mathrm{d} s, \\
        \mathbf{L}(t + \Delta t) &= \mathbf{L}(t) + \Delta t \cdot \frac{\mathrm{d}}{\mathrm{d}s} \frac{\mathbf{Q}(\mathbf{z}(s)) \cdot \mathbf{K}^\top(\mathbf{z}(s))}{\sqrt{d_k}} \bigg|_{s = t}
    \end{alignat}
\end{small}
\end{subequations}
where $\mathbf{Q}(\cdot)$ and $\mathbf{K}(\cdot)$ correspond to the query and key in the attention mechanism, respectively. $\mathbf{L}(0) = \mathbf{Q}(\mathbf{z}(0)) \cdot \mathbf{K}^\top(\mathbf{z}(0)) / \sqrt{d_k}$. We show that updating the dynamic of $\mathbf{L}$ following Eq.~(\ref{eq: transform}) is equivalent to directly updating $\mathbf{A}$ in Appendix \ref{app:equi}.

\textbf{(2)} Second, both the solution to projection Eq.~(\ref{eq:pj_sparse}) and its gradient w.r.t. $\mathbf{A}$ are in closed form. See Proposition \ref{prop:sol_sparse} and Proposition \ref{prop:gradient_sparse} in Appendix \ref{app:soft_sparse} for more details. This, in turn, eases the computational flow in the neural architecture with high efficiency and stability.

Though only two versions of projections are discussed under Bregman divergence, we believe they are sufficiently distinguishable for analyzing the behavior of ProjDEs. For generic neural-friendly projections, we leave them to our future work.

\begin{figure}[tb]
  \centering
        % left lower right upper
    \includegraphics[width=.30\textwidth]{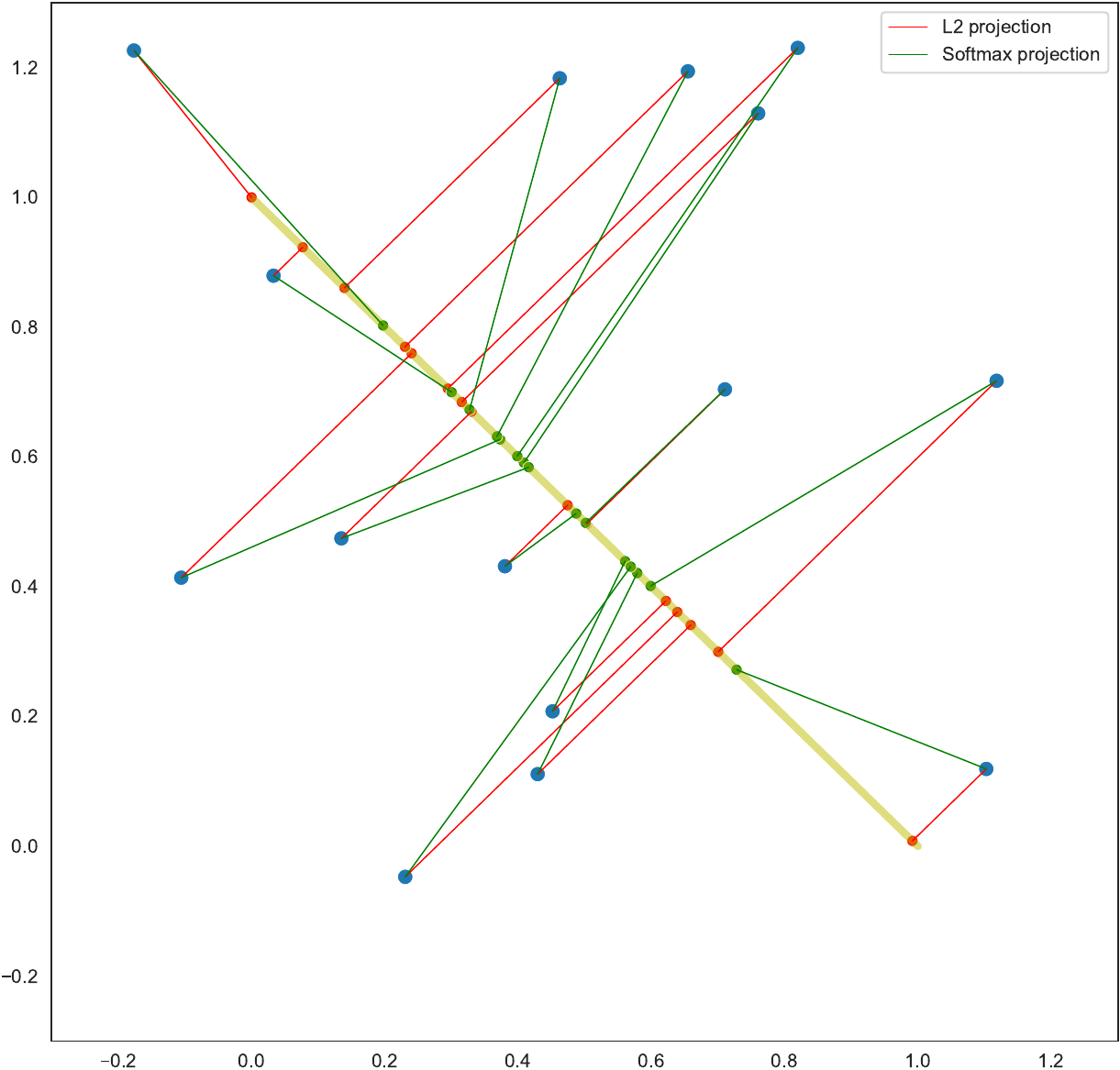}
  \caption{Comparsion of softmax and L2 projection onto a simplex. We see that the softmax projection trends to project onto the ``center'' of the simplex while the L2 projection trends to project onto the corner.}
  \label{fig:softVSsparse}
\end{figure}

\textbf{Integration.} We employ the standard Euler's discretization for performing the forward integration by updating $\mathbf{z}$ and $\mathbf{A}$ simultaneously with a sufficiently small time step. We term our approach a \textbf{d}ecoupling-based \textbf{n}eural \textbf{s}ystem (\textbf{DNS}) using projection Eq.~(\ref{eq:pj_soft}) and \textbf{DNS\textsubscript{G}} using projection Eq.~(\ref{eq:pj_sparse}), respectively.

\begin{figure*}[tb]
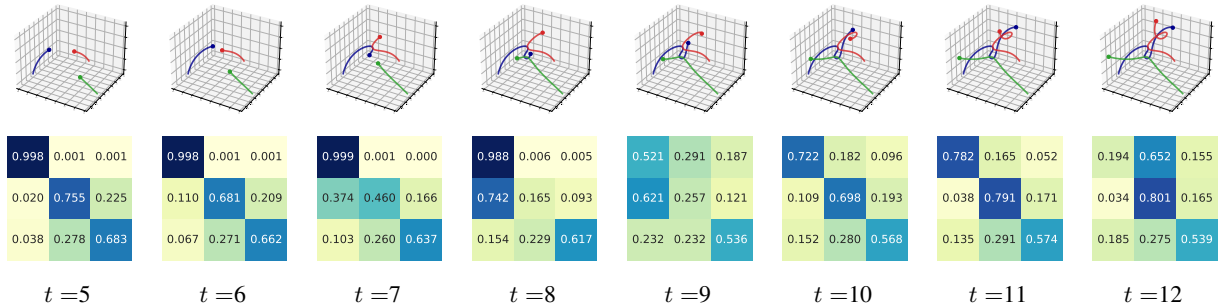

\captionsetup[subfigure]{labelformat=empty}
    \centering
    \foreach \t in {5,...,12}{
        \begin{subfigure}{.11\textwidth}
            \centering
            \includegraphics[width=.98\textwidth, trim=100 100 100 100,clip]{figs/softmax_att/trajectory_\t.pdf}\\
            \includegraphics[width=.98\textwidth, trim=100 100 100 100,clip]{figs/softmax_att/attention_\t.pdf}
            \caption{$t=$\t}
        \end{subfigure}
    }
  \vspace{-8pt}  
  \caption{A figure showing the corresponding three-body trajectory (on the top), as well as the evolution over time on interactions (at the bottom) between three \textbf{latent sub-systems} in a Three-Body environment. Timestamp from 5 to 12.}
  \label{fig: att}
\end{figure*}
\section{Experiments}
\citet{sheard20116} categorized the origins and characteristics of complex systems as dynamic complexity, socio-political complexity, and structural complexity. We carefully select datasets involving the above complexities. The three-body dataset contains rapidly changing interaction patterns (dynamic complexity), the spring dataset stimulates how an individual behaves according to hidden interrelationships (socio-political complexity), and in the human action video dataset where CNNs are frozen, system elements are clustered and required to adapt by RNN to adapt to external needs (structural complexity). We evaluate the performance of DNS on the above synthetic and real-world datasets. More details about the dataset and implementation details can be found in Appendix \ref{app: experiment details} and \ref{app: dataset}. Throughout all the tables consisting of the results, ``-'' indicates ``not applicable'' since RIM cannot handle irregular cases. 
%(except for RIM, which cannot handle irregular input data. Hence, relative experiments are not conducted).
\begin{remark}
    In all the experiments, the input feature is treated holistically without any distinguishable parts. For example, in the Three Body dataset, the input is a 9-dimensional vector, with every 3 dimensions (coordinates) from a single object. However, this prior is not fed into any models in comparison. Thus, we do not compare to models integrated with strong prior such as \citet{kipf2018neural}.
\end{remark}

\textbf{Baselines.} We compare DNS with several selected models capturing interactions or modeling irregular time series, including \textbf{CT-GRU} \citep{mozer2017discrete} using state-decay decay mechanisms, \textbf{RIM} \citep{goyal2019recurrent} updating almost independent modules discretely, and \textbf{NeuralCDE} \citep{kidger2020neural} which reports state-of-the-art performance on several benchmarks. 
\begin{table}[t]
%\vskip 0.15in
\begin{center}
\begin{small}
\begin{sc}
\caption{\textbf{Trajectory prediction}. MSE loss of the three body dataset ($\times 10^{-2}$).}
\label{tab: three_body}
\begin{tabular}{lccr}
\toprule
    Model  & Regular & Irregular \\
\midrule
CT-GRU    & 1.8272   & 2.4811 \\
NeuralCDE & 3.3297   & 5.0077 \\
RIM       & 2.4510   & -      \\
\midrule
DNS       & \textbf{1.7573}  & \textbf{2.2164} \\
\bottomrule
\end{tabular}
\end{sc}
\end{small}
\end{center}
\vskip -0.1in
\end{table}

\textbf{Adapting DNS to the Noisy Case.} 
%To examine the compatibility of our model, we create two variants by slightly modifying \textbf{DNS}. 
To allow DNS fitting to noisy and uncertain circumstances, we create a variant by slightly modifying it.
%The first version 
This variant is obtained by replacing cubic spline interpolation over $\mathbf{x}_i(t)$ with natural smoothing spline \citep{green1993nonparametric}, in consideration of incorporating smoother controls and alleviating data noise. This version is termed as \textbf{DNS\textsubscript{S}}. 
%Another version of DNS is without explicitly modeling the dynamics of $\mathbf{A}$ as in Eq.~(\ref{eq: transform}), but treating one step of integration as a holistic forward pass consisting of both $\mathbf{z}$ and $\mathbf{A}$ following the interpretation in \citet{lu2019understanding,vuckovic2020mathematical}. This variant is called \textbf{DNS\textsubscript{D}}.

\subsection{Three Body}
The three-body problem is characterized by a chaotic dynamical system for most randomly initial conditions. A small perturbation may cause drastic changes in the movement. Taking into account the problem's complexity, it is particularly suitable for testing our approach. 
%No closed form of solution exists for such a complex system though we can model the motion of each pair of celestial bodies easily if we discard the third one, which satisfies our model assumption. 
In this experiment, we consider a trajectory predicting problem given the noisy historical motion of three masses, where gravity causes interactions between them. 
%the subsequent motion of three-point masses given their historical motion. Three celestial bodies move under their mutual gravitational forces, which can be modeled as interactions between subsystems. 
Therefore, models need to (implicitly) learn both Newton's laws of motion for modeling sub-system dynamics and Newton's law of universal gravitation to decouple the latent interaction. This dataset consists of 50k training samples and 5k test samples. For each sample, 
%8 (6 for the partially observed version of this dataset) 
8 historical locations for the regular setting and 6 historical locations (randomly sampled from 8) for the irregular setting in the 3-dimensional space of three bodies are given to predict 3 subsequent locations. To equip with the cluttered setting, the correspondence between dimensions and bodies will not be fed into the learning models, hence a 9-dimensional observation at each time stamp. Models' performance is summarized in Table~\ref{tab: three_body}. 
\begin{figure}[t]
    \centering
    \includegraphics[width=.4\textwidth, trim = {100 40 100 0, clip}]{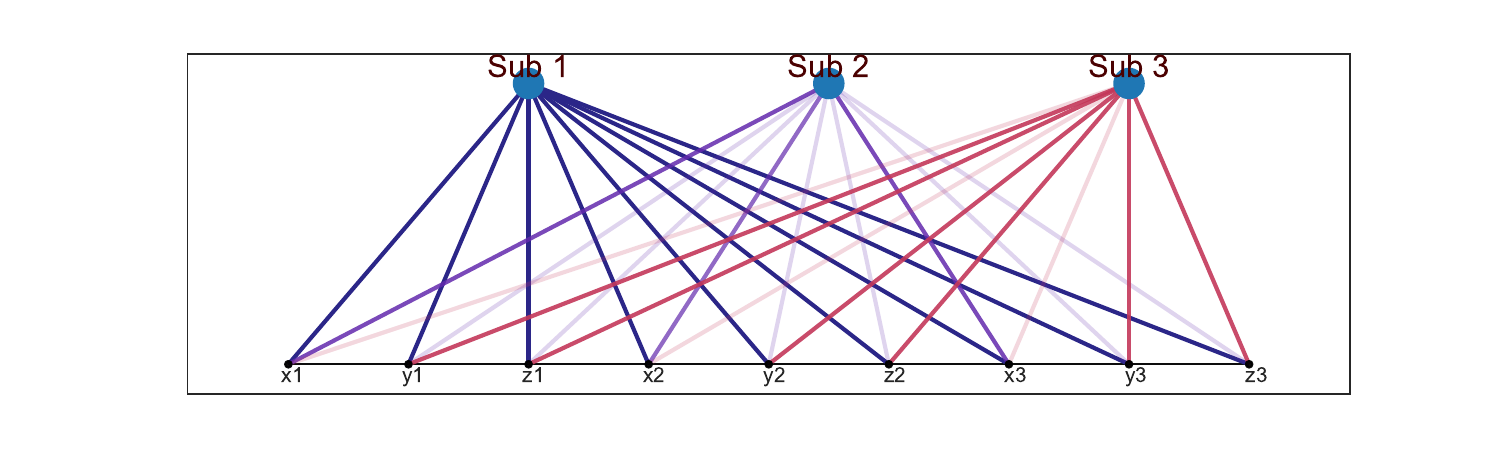}
    \caption{A figure showing the focus of 3 sub-systems on 9-dimensional input of Three Body. The strength of focus is reflected by the thickness of the lines.}
    \label{fig:body_focus}
\end{figure}
We can conclude that DNS outperformed all the selected counterparts in both regular and irregular settings. Notably, although our method is built on NeuralCDE, with the decoupling, the performance can be significantly improved. See Table~\ref{tab: appendix three body} in Appendix \ref{app:three body} for more detailed results.
\begin{table*}[t]
%\vskip 0.15in
\begin{center}
\begin{small}
\begin{sc}
\caption{\textbf{Link prediction}. Accuracy on Spring ($\%$). \textsc{Clean}, \textsc{Noisy}, and \textsc{Short} correspond to settings with clean, noisy, and short portion data, respectively. Detailed results for \textsc{Clean} and \textsc{Noisy} are separately summarized in Tab.~\ref{tab:spring app} and Tab.~\ref{tab: appendix noisy spring} in the appendix.}
\label{tab: spring}
\begin{tabular}{lcc|cc|cc}
\toprule
    \multirow{2}{*}{Model} & \multicolumn{2}{c}{Clean} & \multicolumn{2}{c}{Noisy} & \multicolumn{2}{c}{Short}\\
    \cline{2-7}
    & Regular & Irregular & Train\&Test & Test & 50\% & 25\% \\
    \midrule
    CT-GRU & 92.89$\pm$0.52   &  88.47$\pm$0.34 & 92.71$\pm$0.55 & 92.80$\pm$0.53 & 88.67 & 78.00\\
    NeuralCDE & 92.47$\pm$0.06   &  89.74$\pm$0.18 & 90.76$\pm$0.08 & 89.61$\pm$0.09 & 90.75& 87.51\\
    RIM & 89.73$\pm$0.07   &  -  &89.65$\pm$0.14 & 89.64$\pm$0.10  & 80.00& 71.26 \\
    \midrule
    DNS\textsubscript{G} & 94.31$\pm$0.48 & \textbf{94.25$\pm$0.29} & \textbf{93.76$\pm$0.36} & 87.86$\pm$0.46 & \textbf{92.58} & \textbf{92.31} \\
    DNS\textsubscript{S} & \multirow{2}*{\textbf{94.44$\pm$0.69}} & \multirow{2}*{93.60$\pm$1.21} & 93.67$\pm$0.57 & \textbf{92.99}$\pm$\textbf{1.30} & \multirow{2}*{91.11}& \multirow{2}*{92.13}\\ 
    DNS & ~ & ~  & 93.42$\pm$1.05 & 89.56$\pm$0.42 & ~ & \\
\bottomrule
\end{tabular}
\end{sc}
\end{small}
\end{center}
\vskip -0.1in
\end{table*}

\textbf{Visualization and Analysis.} We visualize dynamics $\mathbf{A}$ of DNS along the movements of three body system. See Fig. \ref{fig: att} for results. We set the time stamps starting from 5 to 12 to make visualization more informative. It is seen in the beginning ($t=5,6$ or even earlier), $\mathbf{A}$ remains stable as the three bodies are apart from each other without intensive interactions. At $t=7$, $\mathbf{A}$ demonstrates obvious change when two bodies start to the coil. Another body joins in this party at $t=8$, yielding another moderate change of $\mathbf{A}$. When flying apart, one body seems more independent, while another two keep entangled together. These are well reflected via the meta-system $\mathbf{A}$. To further see how the holistic 9-dimensional input is decoupled into sub-systems $\mathbf{z}_i$, we visualize the sub-system focus in Fig.~\ref{fig:body_focus} (also see Appendix \ref{app:cam}). Interestingly, latent entities (sub-systems) do not correspond to physical entities (three bodies). Instead, the first sub-system puts more focus on the whole input, but the remaining two sub-systems concentrate on the x-axis and y/z-axis, respectively. Though counterintuitive, this unexpected decoupling exhibits good performance. We will investigate how to decouple out physical entities from cluttered observations in our future work.
\begin{table}[tb]
\centering
\begin{small}
\begin{sc}
\caption{\textbf{Link prediction}. Ablation study. ($\%$). 
%We can see that the control encoder does not have a significant impact on the performance.
}
\label{tab:abl}
\begin{tabular}{lc}
\toprule
Control & Accuracy ($\%$) \\
\midrule
% No encoding + 128       & 90.02                \\ 
% No encoding + 256       & 91.06                \\ 
% No encoding + 512       & 91.57                \\ 
% MLP(2$\times$input) + 128       & 87.01                \\ 
% MLP(2$\times$input) + 256       & 90.87                \\ 
% MLP(2$\times$input) + 512      & 91.51                \\ 
% MLP(16$\times$input) + 128      & 91.17                \\ 
% MLP(16$\times$input) + 256      & 91.08                \\ 
% MLP(16$\times$input) + 512      & 90.70                \\ 
% DNS (8$\times$MLP(2$\times$input))      & \textbf{95.38}                \\ 
No encoding       & 91.57                \\ 
MLP(2$\times$input)     & 91.51                \\ 
MLP(16$\times$input)      & 91.17                \\  
DNS (8$\times$MLP(2$\times$input))      & \textbf{95.38}                \\ 
\bottomrule
\end{tabular}
\end{sc}
\end{small}
\end{table}
\begin{figure*}[tb]
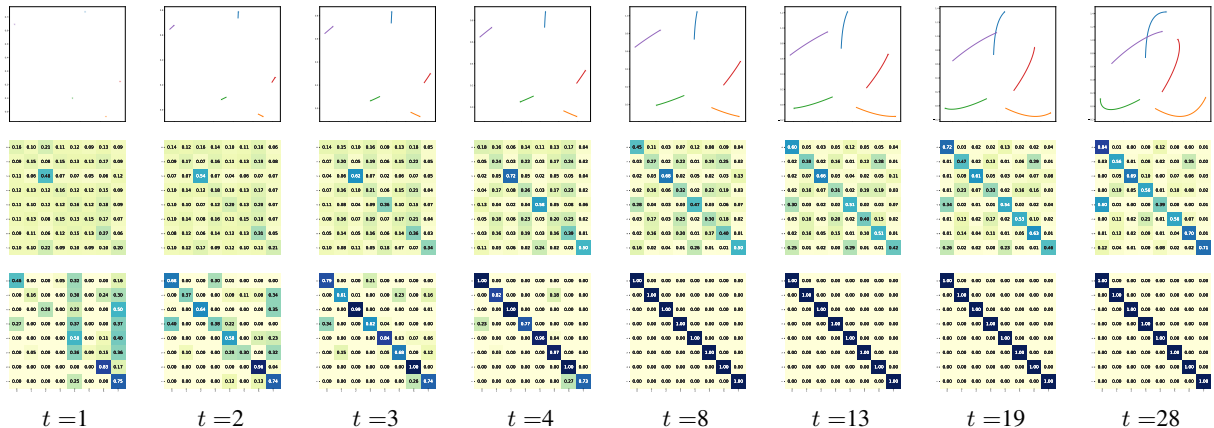

\captionsetup[subfigure]{labelformat=empty}
\centering
    \foreach \t in {1,2,3,4,8,13,19,28}{
        \begin{subfigure}{.11\textwidth}
        \centering
        \includegraphics[width=.98\textwidth, trim={25 25 25 0}]{figs/spring_att/traj_\t.pdf}\\ \vspace{-5pt}
         \includegraphics[width=.98\textwidth, trim={25 25 25 0}]{figs/spring_att/soft_\t.pdf}\\ \vspace{-5pt}
        \includegraphics[width=.98\textwidth, trim={25 25 25 0}]{figs/spring_att/sparse_\t.pdf}
        \vspace{-15pt}
        \caption{$t=$\t}
        \end{subfigure}
    }
    \vspace{-10pt}
\caption{Visualization of the evolution of the meta-systems of DNS and DNS\textsubscript{G} on Spring dataset. On each time stamp $t$, from top to bottom, we show the trajectory of the 5 balls, the meta-system state of DNS, and the meta-system state of DNS\textsubscript{G}, respectively.}
\label{fig:vis_spring}
\end{figure*}

\subsection{Spring}
We experiment with the capability of DNS in decoupling the independence in complex dynamics controlled by simple physics rules. We use a simulated system in which particles are connected by (invisible) springs \citep{kuramoto1975self,kipf2018neural}. Each pair of particles has an equal probability of having an interaction or not. Our task is to use observed trajectory to predict whether there are springs between any pair of two particles, which is analogous to the task of link prediction under a dynamical setting. This can be inferred from whether two trajectories change coherently. The spring dataset consists of 50k training examples and 10k test examples. Each sample has a length of 49. We test a variety of combinations of the number of sub-systems and dimensions of the hidden state. Experimental results are in Table~\ref{tab: spring}. To test the models' noise resistance, we add Gaussian noise to the spring dataset and obtain the noisy spring dataset. We set two scenarios, ``Train\&Test'' and ``Test'', corresponding to injecting noise at both training and test phases and only at testing phases, respectively. Experimental results are in Table~\ref{tab: spring}.

\textbf{Clean Spring.} From \textsc{Clean} part of Table \ref{tab: spring}, we see variants of DNS stably outperform all the selected counterparts by a large margin. Especially, under the irregularly sampled data, DNS and DNS\textsubscript{G} have a remarkable performance gap with all other methods and maintain reliability as in the regular setting. We believe this is significant since learning from irregularly sampled data is typically much more difficult than learning from normal data.
%DNS\textsubscript{D}, with a holistic forward integration, cannot achieve comparative performance to standard DNS. To analyze, we visualize the magnitude of gradients of $\mathbf{A}$ (with DNS and DNS\textsubscript{D}) along with the training process as in Figure \ref{fig: grad}. One can readily find that the proposed method DNS, by fully decoupling $\mathbf{A}$ can derive a more significant gradient in the early training stage than DNS\textsubscript{D}. We argue that explicitly introducing interaction dynamics $\mathbf{A}$ creates a ``highway'' in parallel to each sub-system, which tends to ease the gradient flow throughout the training, and is thus beneficial for training on complex and cluttered sequential data.

\textbf{Noisy Spring.} According to \textsc{Noisy} part of Table \ref{tab: spring}, DNS\textsubscript{S} is quite reliable in noisy cases. It seems a smoothing procedure on the controls can be helpful under massive uncertainty. Also, we see that adding noise tends to damage the performance of all methods. This also raises one of our future research directions to investigate how to handle different controls. Without applying a smooth cubic spline, DNS can still have a good performance, which indicates that by decoupling, the model focuses on learning latent interaction patterns, and patterns are less susceptible to noise. 

\textbf{Visualization and Analysis.}  We also visualize state $\mathbf{A}$ of meta-systems over time in Fig.~\ref{fig:vis_spring} for Spring. From top to bottom, the first, second and third rows correspond to the trajectory of particles, meta-system state of DNS, and meta-system state of DNS\textsubscript{G}. One interesting thing we note is that the interactions in DNS\textsubscript{G} almost concentrate on the starting portion of all the time stamps. At $t=8$ and after, there is no interaction at all. Though not obvious, this also happens to DNS in the sense that $\mathbf{A}$ tends to be diagonal. We suppose this is because DNS and DNS\textsubscript{G} only need a portion of data from the start to determine the existence of a link rather than looking into all the redundant time stamps. 

\textbf{Short Spring.} We thus verify this by training and testing both variants with 50\% and 25\% of data cropped from the starting time stamp and summarize results in \textsc{Short} part of Table \ref{tab: spring}. It is seen that incomplete data in this task only slightly impact the performance. And this can be surprisingly reflected in the evolution of meta-systems. This also aligns with the intuition that \emph{Link prediction} needs fewer data than \emph{Trajectory prediction} as in Three Body.
\begin{table}[tb]
%\vskip 0.15in
\begin{center}
\begin{small}
\begin{sc}
\caption{\textbf{Video classification}. Accuracy of the human actions dataset ($\%$). \textsc{Norm} and \textsc{Unnorm} refer to normalized and unnormalized inputs, respectively. Detailed results with superscript $^\dagger$ and $^\ddagger$ are in Tab.~\ref{tab:action app} and Tab.~\ref{tab:v_unnorm}, respectively.}
\label{tab: video}
\begin{tabular}{lccc}
\toprule
    \multirow{2}{*}{Model} &  \multicolumn{1}{c}{Norm} & \multicolumn{2}{c}{Unnorm}\\
    \cline{2-4}
     & Irreg & Reg & Irreg \\
\midrule
CT-GRU    & 67.30$\pm$6.19$^\dagger$   & 60.33$^\ddagger$ & 66.67$^\ddagger$\\
NeuralCDE & 89.73$\pm$3.38$^\dagger$   & 70.33$^\ddagger$ & 59.17$^\ddagger$\\
RIM       & -      & 55.50$^\ddagger$ & - \\
\midrule
DNS       & \textbf{91.35}$\pm$\textbf{3.48}$^\dagger$  & \textbf{97.00}$^\ddagger$ & \textbf{95.33}$^\ddagger$\\
\bottomrule
\end{tabular}
\end{sc}
\end{small}
\end{center}
\vskip -0.1in
\end{table}

\textbf{Ablation Study.} Since our method merely incorporates an extra meta-system and a control encoder for modeling the interaction compared to standard NeuralCDE, we conduct experiments under different settings to see how different encoders and hidden state dimensions can contribute to improving NeuralCDE. To ensure fairness, we cast a 2-layer MLP with different output sizes (2 and 16 times of input size) as in DNS to obtain varying sizes of controls. Results are summarized in Table \ref{tab:abl} (detailed in Tab.~\ref{tab: appendix spring abl}). We see that with an extra control encoder, there is no obvious performance difference among these settings. However, once the interaction meta-system is imposed, DNS can achieve quite significant performance gain. This, in turn, shows the necessity of the proposed meta-system for explicitly modeling the evolving interactions.

%\begin{table}[t]
%\vskip 0.15in
%\begin{center}
%\begin{small}
%\begin{sc}
%\caption{\textbf{Link prediction.} Accuracy on short Spring ($\%$). }
%\label{tab: short spring}
%\begin{tabular}{lccr}
%\toprule
%    Model & 25\% & 50\% & 100\% \\
%\midrule
%DNS\textsubscript{G}       & 92.31   & ?  & 94.31 \\
%DNS                        & 92.13   & 91.11  &  95.13\\
%\bottomrule
%\end{tabular}
%\end{sc}
%\end{small}
%\end{center}
%\vskip -0.1in
%\end{table}

%\begin{table}[t]
%\vskip 0.15in
%\begin{center}
%\begin{small}
%\begin{sc}
%\caption{\textbf{Link prediction.} Accuracy on noisy Spring ($\%$).}
%\label{tab: noisy spring}
%\begin{tabular}{lccr}
%\toprule
%    Model & Train\&Test & Test \\
%\midrule
%CT-GRU                     & 92.71$\pm$0.55   &  92.80$\pm$0.53  \\
%NeuralCDE                  & 90.76$\pm$0.08   &  89.61$\pm$0.09  \\
%RIM                        & 89.65$\pm$0.14   &  89.64$\pm$0.10  \\
%\midrule
%DNS\textsubscript{S}       & \textbf{93.67$\pm$0.57}  & \textbf{92.99$\pm$1.30} \\
%DNS                        & 93.42$\pm$1.05   &  89.56$\pm$0.42 \\
%\bottomrule
%\end{tabular}
%\end{sc}
%\end{small}
%\end{center}
%\vskip -0.1in
%\end{table}

\subsection{Human Actions}
The recognition of human actions dataset contains three types of human actions, which are hand clapping, hand waving, and jogging \citep{schuldt2004recognizing}. For this dataset, we consider the limbs of the character as subsystems. When the character does one kind of action, subsystems interact in a specific pattern. We test the performance of all the selected models with the learnable backbone Resnet18 \citep{he2016deep}. We also test the compatibility of all methods with different dynamical ranges: \textsc{Norm} and \textsc{Unnorm} indicate pixel value in $[0,1]$ and $[0,255]$, respectively. Experimental results are summarized in Table \ref{tab: video}. DNS consistently outperforms all other methods and exhibits strong compatibility to drastically changed ranges under \textsc{Unnorm} setting. Thus it is potentially more flexible to be integrated into various tasks with a large dynamical range (e.g., earthquake).

To view how the decoupling works for video recognition tasks, we visualize the strength of the learned parameters by mapping the 128-D feature into 6 latent sub-systems in Figure \ref{fig: cnn top feature} with re-ordered indices for better view. 
%For a better view, we re-organize the order of the 128-D feature. 
It can be seen that there are some obvious latent structures in the grouping of the parameters 128-D control to the system. Each sub-system mainly focuses on a small portion of the control, based on which we can infer that each sub-system models different components in inputted images.

\begin{figure}[tb]
  \centering
        % left lower right upper
    \includegraphics[width=.45\textwidth, trim=0 20 0 20, clip]{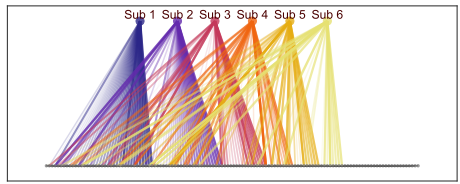}
    \vspace{-10pt}
  \caption{A figure showing the importance of each feature vector entry for subsystems}
  \label{fig: cnn top feature}
\end{figure}

%\subsection{Ablation study}
%Since our method merely incorporates an extra meta-system and a control encoder for modeling the interaction compared to standard NeuralCDE, we conduct experiments under different settings to see how different encoders and hidden state dimensions can contribute to improving NeuralCDE. To ensure fairness, we cast a 2-layer MLP with different output sizes (2 and 16 times of input size) as in DNS to obtain varying sizes of controls. Results are summarized in Table \ref{tab:abl}. We see that with an extra control encoder, there is no obvious performance difference among these settings. However, once the interaction meta-system is imposed, DNS can achieve quite significant performance gain. This, in turn, shows the necessity of the proposed meta-system for explicitly modeling the evolving interactions.

\subsection{Impact of Subsystem Number}
For complex systems, the number of latent entities in the systems is hard to define. For example, in the spring dataset, there are 5 particles randomly connected to each other. One may imagine the best number of subsystems to be 5. But a more reasonable approach is to define the number of subsystems by the average edge connectivity $\lambda$ of the particle graph whose vertices are 5 particles and edges being the invisible spring. This approach is based on the assumption that to remove interactions by cutting the minimum number of the spring, we should cut at least $\lambda$ springs and result in $\lambda$ independent subsystems. Hence, the optimal settings of the number of subsystems may not determine by the number of physical entities. An approach for tuning this hyperparameter is to use a grid search. From the experiment results on the spring dataset, we can see that DNS still has a satisfying performance when this hyperparameter is not optimal.

\section{Conclusion}
In this paper, we propose a method for modeling cluttered and irregularly sampled sequential data. Our method is built upon the assumption that complex observation may be derived from relatively simple and independent latent sub-systems, wherein the interactions also evolve over time. We devise a strategy to explicitly decouple such latent sub-systems and a meta-system governing the interaction. Inspired by recent findings of projected differential equations and the tool of Bregman divergence, we present a novel interpretation of our model and pose some potential future directions. Experiments on various tasks demonstrate the prominent performance of our method over previous state-of-the-art methods.

\nocite{peters2019sparse}

\bibliography{ref}
\bibliographystyle{icml2023}

%%%%%%%%%%%%%%%%%%%%%%%%%%%%%%%%%%%%%%%%%%%%%%%%%%%%%%%%%%%%%%%%%%%%%%%%%%%%%%%
%%%%%%%%%%%%%%%%%%%%%%%%%%%%%%%%%%%%%%%%%%%%%%%%%%%%%%%%%%%%%%%%%%%%%%%%%%%%%%%
% APPENDIX
%%%%%%%%%%%%%%%%%%%%%%%%%%%%%%%%%%%%%%%%%%%%%%%%%%%%%%%%%%%%%%%%%%%%%%%%%%%%%%%
%%%%%%%%%%%%%%%%%%%%%%%%%%%%%%%%%%%%%%%%%%%%%%%%%%%%%%%%%%%%%%%%%%%%%%%%%%%%%%%
\newpage
\appendix
\onecolumn
\section{Appendix}

% \subsection{Details about Figure \ref{fig: grad}}
% When plotting the figure of gradients change along the training process, we drop the points where gradients are relatively too large. In practice, we can avoid gradient exploding gradient problems by gradient clipping.

\subsection{Details about Finding the Attention of Each Subsystem}
\subsubsection{Model's Decouple of the Three Body System}\label{app:cam}
Inspired by Grad-CAM \cite{selvaraju2017grad}, we compute the sensitivity of the control signal with respect to input vectors. Such sensitivity is evaluated by the control's gradient with respect to input vectors. If the control signal of a subsystem is more sensitive to an entry of input vectors, we conclude that the subsystem focuses on this entry. We investigate the model's attention on all training samples at timestamps where the mutual gravity force of three celestial entities is strong. The results show that for all samples, without loss of generality, the first subsystem focuses on all the entries of input vectors, the second subsystem focuses on the motions on the $x$-axis, and the last subsystem focuses on the motions on the $y$-axis and $z$-axis. 

\subsubsection{Details about Figure \ref{fig: cnn top feature}}
We replace the fully connected layer in the pretained Resnet18 with another neural network whose output size equals 64. Image feature vectors are fed forward by a linear layer of size 64 by 128 and activated by the ReLu function. Then, feature vectors are fed forward by distinct linear layers, and we obtain different control signals for each subsystem. In Figure \ref{fig: cnn top feature}, gray points on the second line denote entries of the 128-dimensional feature vector after reordering. For each subsystem, we plot the top 40 entries which have the greatest impact on control signals. 

\subsection{On the Equivalence of Modeling $\frac{\mathrm{d}\mathbf{A}}{\mathrm{d}t}$ and $\frac{\mathrm{d}\mathbf{L}}{\mathrm{d}t}$}\label{app:equi}
Let $\mathbf{L}(t)$ denotes the multiplication of key and query, i.e., $\mathbf{L}(t) = \frac{\mathbf{Q}(t) \mathbf{K}^\top (t)}{\sqrt{d_k}}$ and $\mathbf{A}=\mathrm{softmax}(\mathbf{L})$. If we model the dynamics of $\mathbf{L}(t)$, we obtain
\begin{equation}
    \mathbf{L}(t + \Delta t) = \mathbf{L}(t) + \Delta t \cdot \frac{\mathrm{d}\mathbf{L}}{\mathrm{d}t},
\end{equation}
Apply the $\mathrm{softmax}$ function on both sides of the equation, and we have
$$
\begin{aligned}
\mathbf{A}(t + \Delta t) &= \mathrm{softmax}( \mathbf{L}(t) + \Delta t \cdot \frac{\mathrm{d}\mathbf{L}}{\mathrm{d}t}) + \mathbf{A}(t) - \mathbf{A}(t) \\
&= \mathbf{A}(t) + \mathrm{softmax}( \mathbf{L}(t) + \Delta t \cdot \frac{\mathrm{d}\mathbf{L}}{\mathrm{d}t}) - \mathrm{softmax}(\mathbf{L}(t))
\end{aligned}
$$
Reorder the equation, we have
$$
\begin{aligned}
\frac{\mathbf{A}(t + \Delta t) - \mathbf{A}(t)}{\Delta t} &= \frac{\mathrm{softmax}(\mathbf{L}(t) + \Delta t \cdot \frac{\mathrm{d}\mathbf{L}}{\mathrm{d}t}) - \mathrm{softmax}(\mathbf{L}(t))}{\Delta t} \\
&= \frac{\mathrm{softmax}(\mathbf{L}(t) + \Delta t \cdot \frac{\mathrm{d}\mathbf{L}}{\mathrm{d}t}) - \mathrm{softmax}(\mathbf{L}(t))}{\Delta t \cdot \frac{\mathrm{d}\mathbf{L}}{\mathrm{d}t}} \cdot \frac{\mathrm{d}\mathbf{L}}{\mathrm{d}t}
\end{aligned}
$$
Take $\Delta t \rightarrow 0$, we have
\begin{equation}
    \frac{\mathrm{d}\mathbf{A}}{\mathrm{d}t} = \frac{\mathrm{d}\mathrm{softmax}(\mathbf{L}(t))}{\mathrm{d}\mathbf{L}} \cdot \frac{\mathrm{d}\mathbf{L}}{\mathrm{d}t},
\end{equation}
which is equivalent to the update step in Eq.~(\ref{eq: transform}).
%dynamics by $\frac{\mathrm{d}\mathbf{a}(t)}{\mathrm{d}t}=\Pi_{\mathcal{K}}(\mathbf{a},H(\mathbf{a}))$

\subsection{$\mathrm{softmax}$ and $\mathrm{sparsemax}$}\label{app:soft_sparse}
In \citet{wainwright2008graphical}, authors find a few similarities between $\mathrm{softmax}$ and $\mathrm{sparsemax}$ functions. 

$\mathrm{softmax}$ operator: a projection operator with entropic regularization
\begin{equation*}
    \mathrm{softmax} (\mathbf{z}) = \argmin_{\mathbf{y} \in \Delta^n} \mathbf{z}^\top\mathbf{y} - \mathbb{H}^{\text{entr}}(\mathbf{y})
\end{equation*}
where $\mathbb{H}^{\text{entr}}(\mathbf{y}) = \sum_i \mathbf{y}_i \log \mathbf{y}_i$. \\
$\mathrm{sparsemax}$ operator: a projection operator with Gini entropy regularization
\begin{subequations}
    \begin{alignat}{2} 
    \label{eq: sparsemax}
    \mathrm{sparsemax} (\mathbf{z}) &= \argmin_{\mathbf{p} \in \Delta^n} \mathbf{z}^\top \mathbf{y} - \mathbb{H}^{\text{gini}}(\mathbf{y}) \\
    &= \argmin_{\mathbf{y} \in \Delta^n} || \mathbf{z} - \mathbf{y} ||^2
    \end{alignat}
\end{subequations}
where $\mathbb{H}^{\text{gini}}(\mathbf{y}) = \frac{1}{2} \sum_i \mathbf{y}_i (\mathbf{y}_i - 1)$.

\begin{proposition}\label{prop:sol_sparse}
The solution of Eq.~(\ref{eq: sparsemax}) is of the form:
\begin{equation}
    \mathrm{sparsemax}_i (\mathbf{z}) = [\mathbf{z}_i - \tau(\mathbf{z})]_{+}, \label{eq: L2 form}
\end{equation}
where $\tau: \mathbb{R}^K \rightarrow \mathbb{R}$ is the unique function that satisfies $\sum_j [\mathbf{z}_j - \tau(\mathbf{z})]_{+} = 1$ for every $\mathbf{z}$. Furthermore, $\tau$ can be expressed as follows. Let $\mathbf{z}_{(1)} \geq \mathbf{z}_{(2)} \geq \dots \geq \mathbf{z}_{(K)}$ be the sorted coordinates of $\mathbf{z}$, and define $[K] := \{1, 2, ..., K\}$ and $k(\mathbf{z}) := \max \{ k \in [K] | 1 + k \mathbf{z}_{(k)} > \sum_{j \leq k} \mathbf{z}_{(j)}\}$. Then, 
\begin{equation}
    \tau(\mathbf{z}) = \frac{(\sum_{j \leq k(\mathbf{z})} \mathbf{z}_{(j)} ) - 1}{k(\mathbf{z})} = \frac{(\sum_{j \in S(\mathbf{z})} \mathbf{z}_{(j)} ) - 1}{|S(\mathbf{z})|} \label{eq: l2 tau}
\end{equation},
where $S(\mathbf{z}) := \{ j \in [K] | \mathrm{sparesemax}_j (\mathbf{z}) > 0 \}$ is the support of $\mathrm{sparsemax}(\mathbf{z})$ \cite{martins2016softmax}.
\end{proposition}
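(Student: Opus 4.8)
The plan is to treat $\mathrm{sparsemax}$ through its $\ell_2$ characterization in Eq.~(\ref{eq: sparsemax}), i.e. as the Euclidean projection of $\mathbf{z}$ onto the probability simplex $\Delta^n$, and to read off the stated threshold form from the KKT system. Since the objective $\tfrac12\|\mathbf{z}-\mathbf{y}\|^2$ is strictly convex and $\Delta^n$ is a nonempty compact convex set, the minimizer exists and is unique, so the KKT conditions are necessary and sufficient and completely determine the answer.

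First I would form the Lagrangian with a scalar multiplier $\tau$ for the equality constraint $\mathbf{1}^\top\mathbf{y}=1$ and a nonnegative vector $\boldsymbol{\mu}$ for $\mathbf{y}\ge 0$, and write out stationarity, primal/dual feasibility, and complementary slackness. Stationarity gives $y_i-z_i+\tau-\mu_i=0$ coordinatewise. Splitting into the two cases $y_i>0$ (so $\mu_i=0$ and $y_i=z_i-\tau$) and $y_i=0$ (so $\mu_i=\tau-z_i\ge 0$, i.e.\ $z_i\le\tau$) collapses to the single closed form $y_i=[z_i-\tau]_+$, which is Eq.~(\ref{eq: L2 form}). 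Imposing the remaining constraint $\sum_i y_i=1$ yields $\sum_j[\mathbf{z}_j-\tau]_+=1$; the map $\tau\mapsto\sum_j[\mathbf{z}_j-\tau]_+$ is continuous, piecewise linear and strictly decreasing where positive (from $+\infty$ as $\tau\to-\infty$ down to $0$ at $\tau=\mathbf{z}_{(1)}$), so this equation has a unique root, establishing that $\tau(\mathbf{z})$ is the well-defined unique function claimed.

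Next I would pin down $\tau(\mathbf{z})$ explicitly. Because $y_i=[z_i-\tau]_+$ is monotone in $z_i$, the support $S(\mathbf{z})=\{i:z_i>\tau\}$ consists exactly of the $k^\star:=|S(\mathbf{z})|$ largest coordinates $\mathbf{z}_{(1)}\ge\cdots\ge\mathbf{z}_{(k^\star)}$, and summing $y_i=z_i-\tau$ over the support gives $\tau=\big(\sum_{j\le k^\star}\mathbf{z}_{(j)}-1\big)/k^\star$. It then remains to show $k^\star=k(\mathbf{z})$. Defining $\phi(k):=1+k\,\mathbf{z}_{(k)}-\sum_{j\le k}\mathbf{z}_{(j)}$, I would observe that $\mathbf{z}_{(k^\star)}>\tau$ rearranges to $\phi(k^\star)>0$, so $k^\star$ lies in the set defining $k(\mathbf{z})$; and for any $k>k^\star$, substituting $\sum_{j\le k^\star}\mathbf{z}_{(j)}=1+k^\star\tau$ and using $\mathbf{z}_{(j)}\ge\mathbf{z}_{(k)}$ for $j\le k$ together with $\mathbf{z}_{(k)}\le\tau$ yields $\phi(k)\le k^\star(\mathbf{z}_{(k)}-\tau)\le 0$. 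Hence the largest $k$ with $\phi(k)>0$ is exactly $k^\star$, giving $k(\mathbf{z})=k^\star$ and both expressions for $\tau(\mathbf{z})$ in Eq.~(\ref{eq: l2 tau}).

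The main obstacle is this final identification $k(\mathbf{z})=|S(\mathbf{z})|$: the closed form and the implicit characterization $\sum_j[\mathbf{z}_j-\tau]_+=1$ fall out of KKT almost immediately, but turning the implicit threshold into the sorted, combinatorial rule for $k(\mathbf{z})$ requires the monotonicity argument on $\phi$ above. I would take care that the inequalities $\phi(k^\star)>0$ and $\phi(k)\le 0$ for $k>k^\star$ are genuinely driven by the sorted order $\mathbf{z}_{(1)}\ge\cdots\ge\mathbf{z}_{(K)}$ and by $\mathbf{z}_{(k)}\le\tau$ outside the support, since this is precisely what guarantees that the $\max$ in the definition of $k(\mathbf{z})$ selects the correct support size.
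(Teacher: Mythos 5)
Your proposal is correct and follows essentially the same route as the paper: KKT conditions for the Euclidean projection onto $\Delta^n$, the case split $y_i>0$ versus $y_i=0$ yielding $y_i=[z_i-\tau]_+$, and summation over the support to get $\tau$. In fact your argument is slightly more complete than the paper's --- you justify uniqueness of $\tau$ via monotonicity of $\tau\mapsto\sum_j[\mathbf{z}_j-\tau]_+$ and you prove the identification $k(\mathbf{z})=|S(\mathbf{z})|$ with the explicit function $\phi(k)=1+k\,\mathbf{z}_{(k)}-\sum_{j\le k}\mathbf{z}_{(j)}$, a step the paper's proof asserts with a bare ``Therefore.''
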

\begin{proof} 
The Lagrangian of the optimization problem in Eq.~(\ref{eq: sparsemax}) is:
\begin{equation}
    \mathcal{L} (\mathbf{z}, \mathbf{\mu}, \tau) = \frac{1}{2} || \mathbf{y} - \mathbf{z} ||^2 - \mathbf{\mu}^\top \mathbf{y} + \tau (\mathbf{1}^\top \mathbf{y} - 1).
\end{equation}
The optimal $(\mathbf{y}^*, \mathbf{\mu}^*, \tau^*)$ must satisfy the following KKT conditions:
\begin{subequations}
    \begin{alignat}{3}
        \mathbf{y}^* - \mathbf{z} - \mathbf{\mu}^* + \tau^* \mathbf{1} &= 0, \label{eq: l2 KKT 1} \\
        \mathbf{1}^\top \mathbf{y}^* = 1, \mathbf{y}^* \geq 0, \mathbf{\mu}^* &\geq 0, \label{eq: l2 KKT 2} \\
        \mu_i^* \mathbf{y}_i^* = 0, \forall i & \in [K]. \label{eq: l2 KKT 3}
    \end{alignat}
\end{subequations}
If $\mathbf{y}_i^* > 0$ for $i \in [K]$, then from Eq. (\ref{eq: l2 KKT 3}), we must have $\mu_i^* = 0$, which from Eq. \ref{eq: l2 KKT 1} implies $\mathbf{y}_i^* = z_i - \tau^*$. Let $S(\mathbf{z}) := \{ j \in [K] | \mathbf{y}_j^* > 0 \}$. From Eq. (\ref{eq: l2 KKT 2}), we obtain $\sum_{j \in S(\mathbf{z})} (z_j - \tau^*) = 1$, which yields the right hand side of Eq. (\ref{eq: l2 tau}). Again from Eq. (\ref{eq: l2 KKT 3}), we have that $\mu_i^* > 0$ implies $\mathbf{y}_i^* = 0$, which from Eq. (\ref{eq: l2 KKT 1}) implies $\mu_i^* = \tau^* - \mathbf{z}_i \geq 0$, i.e., $\mathbf{z}_i \leq \tau^*$ for $i \notin S(\mathbf{z})$. Therefore, we have that $k(\mathbf{z}) = |S(\mathbf{z})|$, which proves the first equality of Eq. (\ref{eq: l2 tau}). Another way to prove the above proposition using Moreau’s identity \cite{combettes2005signal} can be found in \citet{chen2011projection}.
\end{proof}

\begin{proposition}\label{prop:gradient_sparse}
$\mathrm{sparsemax}$ is differentiable everywhere except at splitting points $\mathbf{z}$ where the support set $S(\mathbf{z})$ changes, i.e., where $S(\mathbf{z}) \neq S(\mathbf{z} + \epsilon \mathbf{d})$ for some $\mathbf{d}$ and infinitesimal $\epsilon$ and we have that
\begin{equation}
\frac{\partial \mathrm{sparsemax}_i (\mathbf{z})}{\partial \mathbf{z}_j} =\left\{
\begin{aligned}
\delta_{ij} - \frac{1}{|S(\mathbf{z})|} & \quad \text{if} \quad i, j \in S(\mathbf{z}) \\
0 & \quad \text{otherwise}
\end{aligned}
\right.
\end{equation}
where $\delta_{ij}$ is the Kronecker delta, which evaluates to 1 if $i=j$ and 0 otherwise. Let $\mathbf{s}$ be an indicator vector whose $i$th entry is 1 if $i \in S(\mathbf{z})$, and 0 otherwise. We can write the Jacobian matrix as
\begin{subequations}
    \begin{alignat}{2}
    \mathbf{J}_{\mathrm{sparsemax}}(\mathbf{z}) &= \mathrm{diag}(\mathbf{s}) - \frac{\mathbf{s} \mathbf{s}^\top}{|S(\mathbf{z})|} \\
    \mathbf{J}_{\mathrm{sparsemax}}(\mathbf{z}) \cdot \mathbf{v} &= \mathbf{s} \odot (\mathbf{v} - \hat{v} \mathbf{1}), \quad \text{with} \quad \hat{v} := \frac{\sum_{j \in S(\mathbf{z})} v_j}{|S(\mathbf{z})|}
    \end{alignat}
\end{subequations}
where $\odot$ denotes the Hadamard product \cite{martins2016softmax}.
\end{proposition}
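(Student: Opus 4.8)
The plan is to differentiate the closed-form expression for $\mathrm{sparsemax}$ obtained in Proposition \ref{prop:sol_sparse}, working on a neighborhood where the support set is locally constant. Recall from Eq.~(\ref{eq: L2 form}) that $\mathrm{sparsemax}_i(\mathbf{z}) = [\mathbf{z}_i - \tau(\mathbf{z})]_{+}$, which equals $\mathbf{z}_i - \tau(\mathbf{z})$ on the support $S(\mathbf{z})$ and is identically zero off the support. The key structural fact I would establish first is that, away from splitting points, $S := S(\mathbf{z})$ is locally constant: the inequality defining $k(\mathbf{z})$ in Proposition \ref{prop:sol_sparse} is strict, so for $\mathbf{z}$ not a splitting point there is a neighborhood on which the same coordinates stay active and the same coordinates stay inactive. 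On such a neighborhood $\mathrm{sparsemax}$ is a composition of smooth maps, hence differentiable.

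Given local constancy of $S$, the threshold reduces to the affine function $\tau(\mathbf{z}) = \left(\sum_{k \in S} \mathbf{z}_k - 1\right)/|S|$ from Eq.~(\ref{eq: l2 tau}), whose partial derivative is $\partial \tau / \partial \mathbf{z}_j = 1/|S|$ when $j \in S$ and $0$ otherwise. I would then split into cases. For $i \in S$ we have $\mathrm{sparsemax}_i(\mathbf{z}) = \mathbf{z}_i - \tau(\mathbf{z})$, so $\partial \mathrm{sparsemax}_i / \partial \mathbf{z}_j = \delta_{ij} - \partial\tau/\partial \mathbf{z}_j$; this gives $\delta_{ij} - 1/|S|$ when $j \in S$, and $\delta_{ij} = 0$ when $j \notin S$ (since then $i \neq j$). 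For $i \notin S$, $\mathrm{sparsemax}_i$ is identically zero in the neighborhood, so its derivative vanishes for every $j$. Collecting the cases yields exactly the stated entrywise formula, supported on $S \times S$.

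Finally, I would assemble these entries into matrix form. Letting $\mathbf{s}$ be the indicator vector of $S$, the entrywise formula $\delta_{ij} - 1/|S|$ restricted to $S \times S$ is precisely $\mathrm{diag}(\mathbf{s}) - \mathbf{s}\mathbf{s}^\top/|S|$, and applying this operator to a vector $\mathbf{v}$ gives $\mathbf{s} \odot (\mathbf{v} - \hat{v}\mathbf{1})$ with $\hat{v} = \sum_{j \in S} v_j / |S|$, after recognizing $\mathbf{s}^\top \mathbf{v} = \sum_{j \in S} v_j$. The main obstacle is the first step: rigorously confining non-differentiability to the splitting points and justifying the local constancy of the support. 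This is handled by the strictness of the inequality defining $k(\mathbf{z})$, which ensures a sufficiently small perturbation neither activates an inactive coordinate nor deactivates an active one; at the splitting points themselves the support changes discontinuously, so the one-sided derivatives disagree and $\mathrm{sparsemax}$ fails to be differentiable, consistent with the statement.
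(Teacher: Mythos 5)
Your proposal is correct and follows essentially the same route as the paper's proof: differentiate the closed form $\mathrm{sparsemax}_i(\mathbf{z}) = [\mathbf{z}_i - \tau(\mathbf{z})]_{+}$ from Proposition \ref{prop:sol_sparse} using $\partial\tau/\partial\mathbf{z}_j = 1/|S(\mathbf{z})|$ on the support and $0$ off it, then case-split on membership in $S(\mathbf{z})$. Your added justification of the local constancy of the support away from splitting points makes explicit a step the paper leaves implicit, but the argument is the same.
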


\begin{proof} 
From Eq. (\ref{eq: L2 form}), we have
\begin{equation}
\frac{\partial \mathrm{sparsemax}_i (\mathbf{z})}{\partial \mathbf{z}_j} =\left\{
\begin{aligned}
\delta_{ij} - \frac{\partial \tau(\mathbf{z})}{\partial z_j} & \quad \text{if} \quad \mathbf{z}_i > \tau(\mathbf{z}) \\
0 & \quad \text{otherwise}
\end{aligned}
\right.
\end{equation}
From Eq. (\ref{eq: l2 tau}), we have
\begin{equation}
\frac{\partial \tau(\mathbf{z})}{\partial \mathbf{z}_j} =\left\{
\begin{aligned}
\frac{1}{|S(\mathbf{z})|} & \quad \text{if} \quad j \in S(\mathbf{z}) \\
0 & \quad \text{otherwise}
\end{aligned}
\right.
\end{equation}
Note that $j \in S(\mathbf{z}) \Longleftrightarrow \mathbf{z}_j > \tau(\mathbf{z})$. Therefore, we have 
\begin{equation}
\frac{\partial \mathrm{sparsemax}_i (\mathbf{z})}{\partial \mathbf{z}_j} =\left\{
\begin{aligned}
\delta_{ij} - \frac{1}{|S(\mathbf{z})|} & \quad \text{if} \quad i, j \in S(\mathbf{z}) \\
0 & \quad \text{otherwise}
\end{aligned}
\right.
\end{equation}
\end{proof}

\subsection{Experiment Details} \label{app: experiment details}
\subsubsection{DNS}
For implementation simplicity, DNS with batch input requires each sample to be observed at the first and last timestamp. Default control signal dimension equals $2 \times $ input\_size. When initializing the Weight matrix of the key and query layer, control encoder, and initial hidden state encoder, we use $0.01 \times $ torch.rand and set bias equals 0. We grid-search hyperparameters of the layer number of neural networks that parameterize the dynamics in $[2, 3, 4]$ ($[2]$ for the spring dataset) and the number of subsystems in $[5, 8, 10]$ ($[6, 8]$ for the human action dataset).

\subsubsection{CT-GRU}
We grid-search hyperparameters of the time for the state to decay to a proportion $e^{-1}$ of its initial level ($\tau$) in $[0.5, 1, 2]$ and the number of traces with log-linear spaced time scales ($M$) in $[5, 8]$.

\subsubsection{NeuralCDE}
We use the Euler method to integrate the CDE. We grid-search hyperparameters of the layer number of neural networks that parameterize the dynamics in $[2, 3, 4]$.

\subsubsection{RIM}
We set relatively unimportant hyperparameters to the default values in the original paper. Key size input:64, value size input: 400, query size input 64, number of input heads: 1, number of common heads: 1, input dropout: 0.1, common dropout: 0.1, key size common: 32, value size common: 100, query size common: 32. We grid-search hyperparameters of the number of blocks and the number of blocks to be updated in $[(5, 3), (8, 3), (8, 5)]$.

\subsection{Training Hyperparameters}
We use 5-fold cross-validation (except for the three-body dataset because training processes of all models are very stable) and early stop if the validation accuracy is not improved for 10 epochs. We use the Adam optimizer and set the learning rate to 1e-3 with a cosine annealing scheduler with eta\_min=1e-4 (5e-5 on the three-body dataset). Except for the spring dataset, we apply gradient clipping with the max gradient norm equal to 0.1. We use cumulative gradients on the three body dataset with batch size equal to 1 and update after 128 times forward. We set the batch size to 128 and 1 on the spring and human action datasets, respectively.

\subsection{Dataset Settings}\label{app: dataset}
\subsubsection{Three Body Dataset}
We use Python to simulate the motion of three bodies. We add a multiplication noise from a uniform distribution $\mathcal{U}(0.995, 1.005)$. We generate 50k training samples, 5k validation samples, and 5k test samples. Three celestial bodies in all samples have a fixed initial position, and each pair has the sample distance. We randomly initialize the velocity so that in most samples, all three bodies have strong interactions, and it is also possible that only two celestial bodies have strong interactions, and the rest moves almost in a straight line. The dataset contains the locations of three bodies in three-dimensional space, so the input size equals 9. All samples have a length of 8. For the partially observed dataset, all samples have a length of 6, and the locations at the last timestamp are always observed. We use the historical motion of three bodies to predict 3 subsequent motions. We train each model with hidden size in [512, 1024, 2048] and report the MSE loss on the test set.

\subsubsection{Spring}
We follow the experiment setting in \citet{kipf2018neural}. We generate 50k/40k (regular/irregular) training samples and 10k test samples and use 5-fold cross-validation. We test the models' noise resistance ability on the noisy spring dataset. The noise level can be seen in Figure \ref{fig: noise}. We set the number of particles to 5. The input contains the current location and velocity of each particle in two dimensions, so the input size is 20. All samples have lengths of 49 and 19 for regular and irregular spring datasets, respectively. Feature vectors at the first and last timestamp are always observed. The task is to predict whether there are springs connecting two particles. We search the hidden size in $[128, 256, 512]$ for CTGRU, RIM and DNS and in $[128, 256, 512, 1024]$ for NeuralCDE. Models' sizes are at the relatively same magnitude level.

\begin{figure}[h]
  \centering
    \includegraphics[width=.5\textwidth, trim=0 18 0 18, clip]{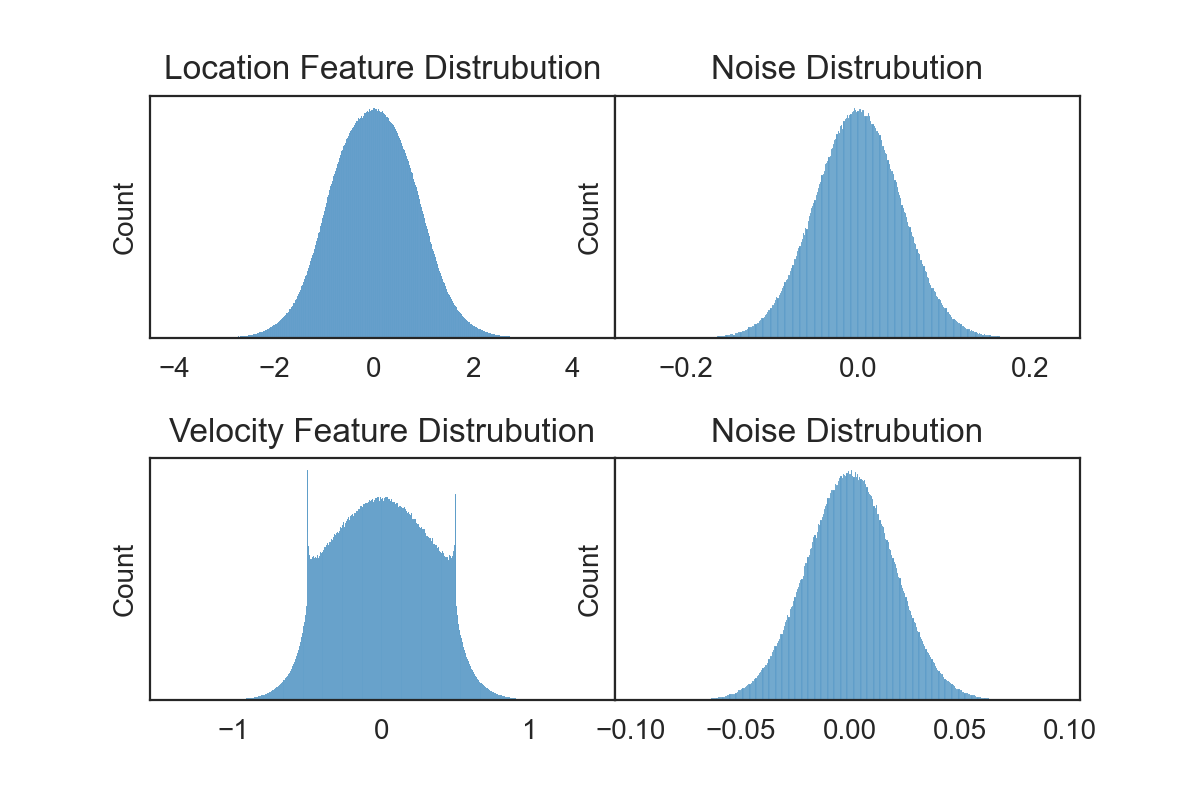}
  \caption{Noise level. Figures on the left-hand side plot feature magnitude levels, and figures on the right-hand side plot the additional noise level added to the corresponding feature vectors in each entry.}
  \label{fig: noise}
\end{figure}

\subsubsection{Human Action}
The human action dataset contains three types of human actions. There are 99 videos for hand clapping, 100 videos for hand waving, and 100 videos for jogging. Videos have a length of 15 seconds on average, and all videos were taken over homogeneous backgrounds with a static camera. We take 50
equispaced frames from each video and downsample the resolution of each frame to 224$\times$224 pixels. For the irregular human action dataset, each video has a length of 36 to 50 frames. We normalize images with mean and std equal to 0.5 and use Resnet18 pretained on ImageNet \citep{he2016deep} as feature extractors. We set the output size of the fully connected layer to 64. Models need to use clustered image features for action recognition. We grid search the best hidden size in $[64, 128, 256]$.

\subsection{Experiment Results Supplement}
We use ``method $x$ $\mathbf{y}$ '' to indicate settings, where $x$ and $\mathbf{y}$ are the dimension of the hidden size and the number of the underlying modules (e.g., $\tau$ and $M$ in CT-GRU, the layer number of neural network in NeuralCDE, the number of blocks to be updated and the number of blocks in RIM, and the number of subsystems and the layer number of neural network in DNS).

\subsubsection{Noisy Spring}
Because of the huge performance gaps among models, we do not run the cross-validation. Results are shown in Table \ref{tab: appendix noisy spring}. DNS slightly surpasses DNS\textsubscript{D} in general.
\begin{center}
\begin{minipage}[c]{0.4\textwidth}
\centering
%\begin{table}
\captionof{table}{\textbf{Trajectory prediction}. MSE loss of the three body dataset ($\times 10^{-2}$)}
%\caption{\textbf{Trajectory prediction}. MSE loss of the three body dataset ($\times 10^{-2}$)}
\label{tab: appendix three body}
\begin{tabular}{lcc}
    \toprule
    \multirow{2}{*}{Model} & \multicolumn{2}{c}{\multirow{1}{*}{Square-Error ($\times 10^{-2}$)}} \\
        \cline{2-3}  & Regular & Irregular \\
        \midrule
        CT-GRU 512    1 8     & 2.0659  & 2.7449      \\
        CT-GRU 1024   1 8     & 1.9509  & 2.5653       \\
        CT-GRU 2048   1 8     & 1.8272  & 2.4811      \\

        NeuralCDE 512   2     & 3.8252  & 5.0077        \\
        NeuralCDE 1024  2     & 4.3028  & 5.4811      \\
        NeuralCDE 2048  2     & 3.3297  & failure    \\

        RIM 512       5 8     & 2.7900  & -          \\
        RIM 1024      5 8     & 2.4510  & -     \\
        RIM 2048      5 8     & failure & -     \\

        DNS 512    3 2        & 2.0265  & 2.5574   \\
        DNS 1024   3 2        & 2.0804  & 2.4735   \\
        DNS 2048   3 2        & \textbf{1.7573} & \textbf{2.2164} \\

    \bottomrule
\end{tabular}
%\end{table}
\end{minipage}
\begin{minipage}[c]{0.5\textwidth}
    \centering
    
\captionof{table}{\textbf{Link prediction}. Accuracy of the spring dataset ($\times \%$). We can see that the control encoder does not have a significant impact on the performance.}
\label{tab: appendix spring abl}
\begin{tabular}{lc}
\toprule
Control & Accuracy ($\%$) \\
\midrule
No encoding + 128       & 90.02                \\ 
No encoding + 256       & 91.06                \\ 
No encoding + 512       & 91.57                \\ 
MLP(2$\times$input) + 128       & 87.01                \\ 
MLP(2$\times$input) + 256       & 90.87                \\ 
MLP(2$\times$input) + 512      & 91.51                \\ 
MLP(16$\times$input) + 128      & 91.17                \\ 
MLP(16$\times$input) + 256      & 91.08                \\ 
MLP(16$\times$input) + 512      & 90.70                \\ 
DNS (8$\times$MLP(2$\times$input))      & \textbf{95.38}                \\ 
\bottomrule
\end{tabular}

\end{minipage}
\end{center}

\subsubsection{Three Body}\label{app:three body}
In Table \ref{tab: appendix three body}, we show models' performance under the same training strategy. For NeuralCDE and RIM, there are two ``failure'' cases that cannot be trained by all means.

\subsubsection{Spring}
More detailed results of the \textsc{Clean} setting of Spring dataset can be found in Table \ref{tab:spring app}. Results under the \textsc{Noisy} Spring setting are summarized in Table \ref{tab: appendix noisy spring}. 
\begin{center}
\begin{table}[h]
  \caption{\textbf{Link Prediction}. Spring Dataset.}
  \label{tab:spring app}
  \centering
  \begin{tabular}{lcrlcr}
    \toprule
    \multirow{2}{*}{Model} & \multicolumn{2}{c}{\multirow{1}{*}{Accuracy ($\%$)}} & \multirow{2}{*}{Model} & \multicolumn{2}{c}{\multirow{1}{*}{Accuracy ($\%$)}} \\
    \cline{2-3} \cline{5-6} & Regular & Irregular &   & Regular & Irregular \\
    \midrule
    CT-GRU 128  0.5 5     &   88.76$\pm$0.09   & 86.24$\pm$0.19  & NeuralCDE 1024 4      &    90.46$\pm$0.26     &  82.95$\pm$0.19 \\
    \cline{4-6} 
    CT-GRU 128  0.5 8     &   88.70$\pm$0.13   & 86.21$\pm$0.15  & RIM 128 3 5  &  89.62$\pm$0.23       &  - \\
    CT-GRU 128  1.0 5     &   88.58$\pm$0.20   & 86.38$\pm$0.15  & RIM 128 3 8  &  84.76$\pm$0.14       &  - \\
    CT-GRU 128  1.0 8     &   88.64$\pm$0.11   & 86.35$\pm$0.23  & RIM 128 5 8  &  89.25$\pm$0.08       &  -  \\
    CT-GRU 128  2.0 5     &   89.81$\pm$0.48   & 86.72$\pm$0.24  & RIM 256 3 5  &  89.34$\pm$0.12       &  -  \\
    CT-GRU 128  2.0 8     &   89.99$\pm$0.84   & 86.68$\pm$0.07  & RIM 256 3 8  &  84.72$\pm$0.12       &  -  \\
    CT-GRU 256  0.5 5     &   89.52$\pm$0.09   & 86.20$\pm$0.15  & RIM 256 5 8  &  \textbf{89.73$\pm$0.07}       &  -  \\
    CT-GRU 256  0.5 8     &   89.41$\pm$0.23   & 86.26$\pm$0.13  & RIM 512 3 5  &  80.44$\pm$0.42       &  -  \\
    CT-GRU 256  1.0 5     &   89.55$\pm$0.17   & 86.43$\pm$0.22  & RIM 512 3 8  &  74.00$\pm$0.11       &  -  \\
    CT-GRU 256  1.0 8     &   89.53$\pm$0.20   & 86.49$\pm$0.15  & RIM 512 5 8  &  83.03$\pm$0.29       &  -  \hfill  \\
    \cline{4-6}
    CT-GRU 256  2.0 5     &   90.57$\pm$0.48   & 87.06$\pm$0.12  & DNS 128 5  2 &  90.50$\pm$1.78    &   91.63$\pm$0.49  \\
    CT-GRU 256  2.0 8     &   90.41$\pm$0.37   & 87.28$\pm$0.12  & DNS 128 8  2 &  93.93$\pm$0.66    &   93.42$\pm$0.51  \\
    CT-GRU 512  0.5 5     &   90.21$\pm$0.38   & 87.22$\pm$0.47  & DNS 128 10 2 &  92.92$\pm$1.31    &   92.94$\pm$0.28  \\
    CT-GRU 512  0.5 8     &   90.70$\pm$0.82   & 86.96$\pm$0.31  & DNS 256 5  2 &  92.34$\pm$1.53    &   91.05$\pm$1.69  \\
    CT-GRU 512  1.0 5     &   90.64$\pm$0.48   & 87.06$\pm$0.23  & DNS 256 8  2 &  93.79$\pm$1.81    &   92.32$\pm$1.95 \\
    CT-GRU 512  1.0 8     &   90.99$\pm$0.90   & 87.02$\pm$0.25  & DNS 256 10 2 &\textbf{94.44$\pm$0.69}&92.98$\pm$1.05 \\
    CT-GRU 512  2.0 5     &   92.50$\pm$0.46   & 88.18$\pm$0.26  & DNS 512 5  2 &  90.55$\pm$1.95    &   90.30$\pm$2.42 \\
    CT-GRU 512  2.0 8     &   \textbf{92.89$\pm$0.52}   & \textbf{88.47$\pm$0.34}  & DNS 512 8  2 &  94.38$\pm$0.95    &   93.57$\pm$0.55 \\
    \cline{1-3}
    NeuralCDE 128  2      &   90.74$\pm$0.11   & 88.59$\pm$0.11  & DNS 512 10 2          &  94.37$\pm$1.21    &   93.60$\pm$1.21 \\ 
    NeuralCDE 128  3      &   89.23$\pm$0.24   & 87.24$\pm$0.40  & DNS\textsubscript{G} 128 5  2 & 91.48$\pm$1.26  &   91.28$\pm$1.66  \\
    NeuralCDE 128  4      &   88.95$\pm$0.09   & 84.64$\pm$0.78  & DNS\textsubscript{G} 128 8  2 & 94.00$\pm$0.55  &   93.11$\pm$0.83 \\
    NeuralCDE 256  2      &   92.11$\pm$0.06   & 89.45$\pm$0.10  & DNS\textsubscript{G} 128 10 2 & 92.92$\pm$1.31  &   93.67$\pm$0.75 \\
    NeuralCDE 256  3      &   91.08$\pm$0.07   & 88.13$\pm$0.13  & DNS\textsubscript{G} 256 5  2 & 91.77$\pm$1.07  &   91.78$\pm$1.39  \\
    NeuralCDE 256  4      &   90.18$\pm$0.08   & 84.52$\pm$0.59  & DNS\textsubscript{G} 256 8  2 & 94.31$\pm$0.48  &   91.99$\pm$2.73  \\
    NeuralCDE 512  2 &\textbf{92.47$\pm$0.06}&\textbf{89.74$\pm$0.18} & DNS\textsubscript{G} 256 10 2 & 92.82$\pm$1.21  &   \textbf{94.25$\pm$0.29} \\
    NeuralCDE 512  3      &   91.56$\pm$0.09   & 87.85$\pm$0.22  & DNS\textsubscript{G} 512 5  2 & 92.14$\pm$1.79  &  90.98$\pm$1.90   \\
    NeuralCDE 512  4      &   90.89$\pm$0.08   & 83.92$\pm$0.16  & DNS\textsubscript{G} 512 8  2 & 93.11$\pm$0.27  &   92.20$\pm$0.47   \\
    NeuralCDE 1024 2      &   91.69$\pm$0.13   & 89.12$\pm$0.39  & DNS\textsubscript{G} 512 10 2 & 94.24$\pm$0.49  &   93.33$\pm$1.07 \\
    NeuralCDE 1024 3      &   91.35$\pm$0.08   & 87.35$\pm$0.22  &  \\
    
    \bottomrule
\end{tabular}
\end{table}
\end{center}

\begin{center}
\begin{table}[h]
  \caption{\textbf{Link Prediction}. Noisy Spring Dataset.}
  \centering
  \label{tab: appendix noisy spring}
  \begin{tabular}{lcrlcr}
    \toprule
    \multirow{2}{*}{Model} & \multicolumn{2}{c}{\multirow{1}{*}{Accuracy ($\%$)}} & \multirow{2}{*}{Model} & \multicolumn{2}{c}{\multirow{1}{*}{Accuracy ($\%$)}} \\
    \cline{2-3} \cline{5-6} & Train\&Test & Test &   & Train\&Test & Test \\
    \midrule
    CT-GRU 128  0.5 5     &  88.73$\pm$0.20    & 88.66$\pm$0.08  & NeuralCDE 1024 4      &  88.96$\pm$0.41       &  88.66$\pm$0.33  \\
    \cline{4-6} 
    CT-GRU 128  0.5 8     &  88.62$\pm$0.15    & 88.63$\pm$0.14             & RIM 128 3 5           &  89.48$\pm$0.23       &  89.59$\pm$0.20  \\
    CT-GRU 128  1.0 5     &  88.58$\pm$0.11    & 88.53$\pm$0.17             & RIM 128 3 8           &  84.91$\pm$0.19       &  84.81$\pm$0.10  \\
    CT-GRU 128  1.0 8     &  88.54$\pm$0.09    & 88.62$\pm$0.10             & RIM 128 5 8           &  89.30$\pm$0.08       &  89.19$\pm$0.04  \\
    CT-GRU 128  2.0 5     &  89.88$\pm$0.38    & 89.72$\pm$0.48             & RIM 256 3 5           &  89.42$\pm$0.12       &  89.31$\pm$0.12  \\
    CT-GRU 128  2.0 8     &  89.74$\pm$0.34    & 89.93$\pm$0.79             & RIM 256 3 8           &  84.99$\pm$0.10       &  84.86$\pm$0.09  \\
    CT-GRU 256  0.5 5     &  89.42$\pm$0.11    & 89.43$\pm$0.12             & RIM 256 5 8           &  \textbf{89.65$\pm$0.14}       &  \textbf{89.64$\pm$0.10}  \\
    CT-GRU 256  0.5 8     &  89.41$\pm$0.08    & 89.33$\pm$0.21             & RIM 512 3 5           &  80.91$\pm$0.35       &  80.50$\pm$0.36  \\
    CT-GRU 256  1.0 5     &  89.34$\pm$0.07    & 89.47$\pm$0.21             & RIM 512 3 8           &  74.11$\pm$0.01       &  74.18$\pm$0.10  \\
    CT-GRU 256  1.0 8     &  89.30$\pm$0.22    & 89.47$\pm$0.19             & RIM 512 5 8           &  83.29$\pm$0.19       &  83.05$\pm$0.36  \\
    \cline{4-6} 
    CT-GRU 256  2.0 5     &  89.87$\pm$0.15    & 90.48$\pm$0.50             & DNS 128 5  2          &  85.23$\pm$8.21       &  84.17$\pm$2.96  \\
    CT-GRU 256  2.0 8     &  90.32$\pm$0.61    & 90.32$\pm$0.40             & DNS 128 8  2          &  92.55$\pm$0.13       &  88.23$\pm$0.62  \\
    CT-GRU 512  0.5 5     &  91.12$\pm$0.66    & 90.10$\pm$0.38             & DNS 128 10 2          &  92.67$\pm$0.85       &  85.74$\pm$0.94  \\
    CT-GRU 512  0.5 8     &  90.89$\pm$0.58    & 90.64$\pm$0.86             & DNS 256 5  2          &  86.53$\pm$5.93       &  86.16$\pm$2.89 \\
    CT-GRU 512  1.0 5     &  90.88$\pm$0.79    & 90.57$\pm$0.46             & DNS 256 8  2          &  92.92$\pm$0.43       &  87.49$\pm$2.47  \\
    CT-GRU 512  1.0 8     &  91.10$\pm$0.71    & 90.92$\pm$0.90             & DNS 256 10 2          &  92.82$\pm$0.75       &  88.22$\pm$1.49  \\
    CT-GRU 512  2.0 5     &  92.35$\pm$0.36    & 92.39$\pm$0.45             & DNS 512 5  2          &  89.68$\pm$2.84       &  85.84$\pm$2.86  \\
    CT-GRU 512  2.0 8     &\textbf{92.71$\pm$0.55}&\textbf{92.80$\pm$0.53}  & DNS 512 8  2          &  93.37$\pm$0.97       &  89.56$\pm$0.42  \\
    \cline{1-3} 
    NeuralCDE 128  2      &  89.22$\pm$0.14    & 88.09$\pm$0.12             & DNS 512 10 2          &  93.42$\pm$1.05       &  87.20$\pm$3.36  \\
    NeuralCDE 128  3      &  87.73$\pm$0.10    & 86.76$\pm$0.27             & DNS\textsubscript{S} 128 5  2          &  89.30$\pm$1.70    &  88.99$\pm$1.80  \\
    NeuralCDE 128  4      &  87.30$\pm$0.14    & 86.86$\pm$0.11             & DNS\textsubscript{S} 128 8  2          &  93.22$\pm$0.39    &  92.08$\pm$0.68  \\
    NeuralCDE 256  2      &  90.26$\pm$0.05    & 88.74$\pm$0.10             & DNS\textsubscript{S} 128 10 2          &  92.83$\pm$1.09    &  92.13$\pm$0.66  \\
    NeuralCDE 256  3      &  89.43$\pm$0.09    & 88.34$\pm$0.13             & DNS\textsubscript{S} 256 5  2          &  92.13$\pm$1.03    &  89.17$\pm$1.43  \\
    NeuralCDE 256  4      &  88.56$\pm$0.13    & 88.06$\pm$0.11             & DNS\textsubscript{S} 256 8  2          &  93.08$\pm$1.11    &  91.49$\pm$2.10  \\
    NeuralCDE 512  2      &\textbf{90.76$\pm$0.08} & 89.27$\pm$0.10         & DNS\textsubscript{S} 256 10  2         &  93.47$\pm$1.60    &  92.10$\pm$1.36 \\
    NeuralCDE 512  3      &  90.09$\pm$0.10    & 89.00$\pm$0.13             & DNS\textsubscript{S} 512 5  2 &  89.68$\pm$1.79    &  90.62$\pm$2.42  \\
    NeuralCDE 512  4      &  89.27$\pm$0.11    & 88.84$\pm$0.05             & DNS\textsubscript{S} 512 8  2          &  92.77$\pm$1.86    &  \textbf{92.99$\pm$1.30}  \\
    NeuralCDE 1024 2      &  90.20$\pm$0.06 &\textbf{89.61$\pm$0.09}        & DNS\textsubscript{S} 512 10 2          &  \textbf{93.67$\pm$0.57}    &  92.10$\pm$1.36  \\
    NeuralCDE 1024 3      &  89.89$\pm$0.23    & 89.42$\pm$0.09  & & & \\
    \bottomrule
\end{tabular}
\end{table}
\end{center}

\subsubsection{Human Action}
Detailed results on \textsc{Norm} and \textsc{Unnorm} setting can be found in Table \ref{tab:action app} and \ref{tab:v_unnorm}.
\begin{table}[h]
\centering
\begin{sc}
  \caption{\textbf{Action Classification}. Accuracy on \textbf{Nomarlized} data of Human Action.}
  \label{tab:action app}
  \centering
  \begin{tabular}{lclc}
    \toprule
    Model & Accuracy ($\%$) & Model & Accuracy ($\%$) \\
    \midrule
    CT-GRU 64   0.5 5     &  61.89$\pm$4.71   &                    NeuralCDE 128  4      &   68.11$\pm$11.74      \\
    CT-GRU 64   0.5 8     &  65.68$\pm$12.92  &                    NeuralCDE 256  2      &   82.16$\pm$2.32       \\
    CT-GRU 64   1.0 5     &  60.54$\pm$4.39   &                    NeuralCDE 256  3      &   64.59$\pm$12.51       \\
    CT-GRU 64   1.0 8     &  60.81$\pm$4.10   &                    NeuralCDE 256  4      &   73.24$\pm$11.7       \\
    \cline{3-4}
    CT-GRU 64   2.0 5     &  57.84$\pm$5.88   &                    DNS 64  6  2          &   83.51$\pm$14.84     \\
    CT-GRU 64   2.0 8     &  61.35$\pm$2.78   &                    DNS 64  6  3          &   89.73$\pm$8.40     \\
    CT-GRU 128  0.5 5     &  57.03$\pm$8.65   &                    DNS 64  6  4          &   85.68$\pm$7.86     \\
    CT-GRU 128  0.5 8     &  55.41$\pm$14.38  &                    DNS 64  8  2          &   80.27$\pm$15.70     \\
    CT-GRU 128  1.0 5     &  61.08$\pm$8.08   &                    DNS 64  8  3          &   90.81$\pm$3.01     \\
    CT-GRU 128  1.0 8     &  63.24$\pm$12.8   &                    DNS 64  8  4          &   \textbf{91.35$\pm$3.48}     \\
    CT-GRU 128  2.0 5     &  58.92$\pm$6.87   &                    DNS 128 6  2          &   68.38$\pm$11.64     \\
    CT-GRU 128  2.0 8     &  59.73$\pm$7.71   &                    DNS 128 6  3          &   87.03$\pm$4.49     \\
    CT-GRU 256  0.5 5     &  62.97$\pm$9.92   &                    DNS 128 6  4          &   90.54$\pm$2.09     \\
    CT-GRU 256  0.5 8     &  60.81$\pm$4.91   &                    DNS 128 8  2          &   75.41$\pm$18.12     \\
    CT-GRU 256  1.0 5     &  58.65$\pm$6.49   &                    DNS 128 8  3          &   90.54$\pm$2.09     \\
    CT-GRU 256  1.0 8     &  60.27$\pm$6.97   &                    DNS 128 8  4          &   72.70$\pm$16.78     \\
    CT-GRU 256  2.0 5     &  60.81$\pm$4.10   &                    DNS 256 6  2          &   79.73$\pm$9.01     \\
    CT-GRU 256  2.0 8     &  \textbf{67.30$\pm$6.19}   &           DNS 256 6  3          &   79.73$\pm$10.64     \\
    \cline{1-2}
    NeuralCDE 64   2      &   71.89$\pm$12.13       &              DNS 256 6  4          &   84.32$\pm$8.74     \\
    NeuralCDE 64   3      &   \textbf{89.73$\pm$3.38}       &      DNS 256 8  2          &   87.84$\pm$4.83     \\
    NeuralCDE 64   4      &   72.16$\pm$5.83        &              DNS 256 8  3          &   82.97$\pm$12.52     \\
    NeuralCDE 128  2      &   82.43$\pm$4.60        &              DNS 256 8  4          &   83.78$\pm$14.38     \\
    NeuralCDE 128  3      &   70.54$\pm$11.51       &              & \\
    \bottomrule
\end{tabular}
\end{sc}
\end{table}

\begin{table}[tb]
\centering
\begin{sc}
  \caption{\textbf{Action Classification}. Accuracy on \textbf{Unnomarlized} data of Human Action. (\%)}
  \label{tab:v_unnorm}
  \centering
  \begin{tabular}{lcc}
    \toprule
    \multirow{2}{*}{Model} & \multicolumn{2}{c}{Unnormalized }  \\
    \cline{2-3} 
    & Regular & Irregular \\ 
    \midrule
    CT-GRU 32 1.0 8 & 58.33 & 56.00 \\
    CT-GRU 64 1.0 8 & 60.33 & 66.67 \\

    NeuralCDE 32 2 & 52.47 & 57.83 \\
    NeuralCDE 64 2 & 70.33 & 59.17 \\

    RIM 32 3 6       & 55.50 & -  \\
    RIM 64 3 6       & 44.83 & -  \\
    \midrule
    DNS 32 6 2    & 95.00 & \textbf{95.33}  \\
    DNS 64 6 2    &  \textbf{97.00} & 93.17  \\
    %DNS 128    &  73.83 & \textbf{98.00} & 90.33 & - & - \\
    
    \bottomrule
  \end{tabular}
\end{sc}
\end{table}

%\begin{figure*}[tb]
%    \centering
%    \foreach \t in {0, 2, 4, 6, 8, 10, 12, 14}{%
%      \includegraphics[width=.1\textwidth]{figs/three_body_att/traj_\t.pdf}
%    }
%    \\
 %   \centering
%    \foreach \t in {0, 2, 4, 6, 8, 10, 12, 14}{%
%      \includegraphics[width=.1\textwidth]{figs/three_body_att/soft_\t.pdf}
%    }
%    \\
%    \centering
%    \foreach \t in {0, 2, 4, 6, 8, 10, 12, 14}{%
%      \includegraphics[width=.1\textwidth]{figs/three_body_att/sparse_\t.pdf}
%    }
%    \caption{Visualization of the evolution of the meta-systems of DNS and DNS\textsubscript{G} on Human Action dataset.}
%    \label{fig:action_vis}
%\end{figure*}

%\begin{figure*}[t]
%    \centering
%    \foreach \t in {1,...,15}{%
%      \includegraphics[width=.17\textwidth, trim=100 100 100 100,clip]{figs/spring_att/fig_\t.pdf}
%    }
%  \caption{Spring}
%\end{figure*}

%\begin{figure*}[tb]
%    \centering
%    \foreach \t in {1,...,8}{%
%      \includegraphics[width=.1\textwidth]{figs/action_att/img_\t.pdf}
%    }
%    \\
%    \centering
%    \foreach \t in {1,...,8}{%
%      \includegraphics[width=.1\textwidth]{figs/action_att/soft_\t.pdf}
%    }
%    \\
%    \centering
%    \foreach \t in {1,...,8}{%
%      \includegraphics[width=.1\textwidth]{figs/action_att/sprase_\t.pdf}
%    }
%    \caption{Visualization of the evolution of the meta-systems of DNS and DNS\textsubscript{G} on Human Action dataset.}
%    \label{fig:action_vis}
%\end{figure*}

%\begin{figure*}[t]
%    \centering
%    \includegraphics[width=.6\textwidth]{}
%\end{figure*}

\end{document}